\documentclass[11pt]{article}

\usepackage[preprint]{acl}

\usepackage{times}
\usepackage{latexsym}

\usepackage[T1]{fontenc}
\usepackage[utf8]{inputenc}

\usepackage{microtype}

\usepackage{inconsolata}

\usepackage{graphicx}

\usepackage[utf8]{inputenc} % allow utf-8 input
\usepackage[T1]{fontenc}    % use 8-bit T1 fonts

\usepackage{url}            % simple URL typesetting
\usepackage{booktabs}       % professional-quality tables
\usepackage{amsfonts}       % blackboard math symbols
\usepackage{nicefrac}       % compact symbols for 1/2, etc.
\usepackage{microtype}      % microtypography

\newcommand{\limfullfeed}{\ensuremath{\tt LimFullFeed}\xspace}
\newcommand{\nolookback}{\ensuremath{\tt NoLookBack}\xspace}
\newcommand{\allfeedback}{\ensuremath{\tt AllFeedback}\xspace}
\newcommand{\limbanfeed}{\ensuremath{\tt LimBanFeed}\xspace}

\newcommand{\bb}[1]{\mathbb{#1}}
\newcommand{\fl}[1]{\mathbf{#1}}
\newcommand{\ca}[1]{\mathcal{#1}}

\newcommand{\s}[1]{\mathsf{#1}}
\newcommand{\lr}[1]{\left|\left|#1\right|\right|}
\newcommand{\ip}[1]{\langle #1 \rangle}
\usepackage{algorithm}
\usepackage{algorithmicx}
\usepackage[noend]{algpseudocode}
\usepackage{amsmath}
\usepackage{amssymb}
\usepackage{amsthm}
\usepackage{bbm}
\usepackage{bm}
\usepackage{color}
\usepackage{dirtytalk}
\usepackage{dsfont}
\usepackage{enumerate}
\usepackage{graphicx}
\usepackage{listings}
\usepackage{mathtools}

\usepackage{soul}
\usepackage{subfigure}
\usepackage{times}
\usepackage{xspace}

\usepackage[dvipsnames]{xcolor}

\usepackage[capitalize,noabbrev]{cleveref}

\usepackage[textsize=tiny]{todonotes}

\theoremstyle{plain}
\newtheorem{theorem}{Theorem}[section]

\theoremstyle{definition}

\newtheorem{assumption}[theorem]{Assumption}
\theoremstyle{remark}

\DeclareMathOperator*{\argmax}{arg\,max\,}

\title{Online Learning with LLM Experts from Limited Feedback}

\author{Wang Wei$^1$, Soumyabrata Pal$^2$, 
Koyel Mukherjee$^2$, Franck Dernoncourt$^2$,\\
\bf{Ryan A. Rossi$^2$, Branislav Kveton$^2$, Hoda Eldardiry$^1$\thanks{Corresponding author.}}\\
$^1$Virginia Tech, $^2$Adobe Research\\
\texttt{\{wangwei718,hdardiry\}@vt.edu}
}

\begin{document}
\maketitle
\begin{abstract}
We study adaptive routing of prompts to large language model (LLM) experts to maximize response quality in an online setting with limited feedback. We formulate it as a bandit problem with $K$ actions that represent experts and $d$ features that encode prompts, over a horizon of $T$ rounds. We propose algorithms that strategically select and observe rewards to minimize regret. In the full-information setting, we achieve a regret of $\tilde{O}(d T / \sqrt{m})$, while in the bandit setting we achieve $\tilde{O}(d T \sqrt{K / m})$, where $m \ll T$ is a budget on feedback. Our experiments show that we efficiently learn high-quality routing strategies across diverse LLMs from limited feedback.
\end{abstract}

\section{Introduction}
\label{sec:introduction}

\emph{Large language models (LLMs)} are pervasive nowadays. OpenAI's GPT models \citep{OpenAI}, LLaMA3 \citep{dubey2024llama}, and Mistral AI \citep{MistralAI} have been successfully used to solve many tasks, such as document processing and code generation. Different LLMs have different costs and capabilities \citep{LLM-comparison}. The diversity of costs, even within the same family of models, can be stark \citep{Open-ai-pricing}. The diverse capabilities of LLMs are obvious from public datasets. For instance, the Nectar dataset \citep{zhu24starling7b} contains responses to $183$k prompts of many popular models judged by GPT-4. When GPT-3.5-Turbo, GPT-3.5-Turbo-Instruct, GPT-4, GPT-4-0613, LLaMA-2-7B-Chat, and Mistral-7B-Instruct are judged, the win rates of the models are $0.182$, $0.091$, $0.203$, $0.319$, $0.073$, and $0.132$. Therefore, no model dominates the others more than a third of the time and adaptation is beneficial.

We formulate the problem of online learning with \emph{LLM experts} as follows. We have $K$ different LLMs and interact with them sequentially over $T$ rounds. In each round, a prompt arrives and we route it to one expert, conditioned on the prompt. The expert responds and its response is associated with some reward, which is unobserved. This is because the responses of LLMs are generally not evaluated by their users. Our goal is to learn to route each prompt to the expert with the highest mean reward. This is impossible without feedback. Therefore, we make a realistic assumption that we get access to limited feedback that is as good as that from humans. This could be human feedback or a stronger LLM used as an \emph{LLM judge} \citep{li2024llms, LLM-judgement, llm-judge}. This feedback is expensive, either because of human labor or computation cost, and thus we can only use it $m \ll T$ times, where $m$ is determined by the available budget. The tradeoff between rewards and feedback is not clear a priori. Therefore, we maximize rewards under the constraint on feedback rather than a linear combination of the two quantities.
Finally, this problem is inherently online since user prompts are revealed only upon interaction, making offline solutions infeasible. Routing decisions must be made sequentially in real time, with no prior knowledge of the prompt distribution.

We solve our problem as a \emph{contextual bandit} \citep{langford08epochgreedy,li10contextual,lattimore19bandit} with $K$ experts, where each LLM is an expert and the context is an embedding of the prompt. The main difference from all prior work on contextual bandits is that only \emph{$m$ rewards out of $T$ can be observed}. The agent can decide \emph{what to observe} and \emph{when to observe it}, as long as it could have observed it before. The main challenge in the algorithm design is doing it at a near-optimal rate in a regret minimization setting. The control over what to observe and when to observe it differentiates our work from other bandit settings that involve partial observations and we discuss these extensively in \cref{sec:related work}. The online setting with limited feedback on LLM response quality differentiates our work from prior LLM optimization approaches, which are typically studied in offline settings or do not directly optimize response quality. We also discuss them in \cref{sec:related work}.
We make the following contributions:
\begin{enumerate}
  \item We study the full-information setting (\cref{sec:full-information}), where the agent can observe the rewards of all experts in any past round. The key idea in our algorithm is to progressively observe rewards of the experts with the highest information gain. The algorithm is computationally efficient and its regret is $\tilde{O}(d T / \sqrt{m})$, which matches our lower bound up to $\sqrt{d}$. Note that the bound is $\tilde{O}(\sqrt{T})$ when $m = \Omega(T)$.
  \item We also study the bandit setting (\cref{sec:bandit}), where the agent can only observe at a certain round the reward of the expert who has generated the response for that round. The key idea in our algorithm is to progressively observe rewards of the experts with the highest information gain, separately for each LLM expert. The algorithm is computationally efficient and its regret is $\tilde{O}(d T \sqrt{K / m})$. The additional $\sqrt{K}$ factor comparing to the full-information setting is due $K$ times less feedback.
  \item We extend the bandit setting to varying expert costs where the price of evaluating the responses of experts is non-uniform. Due to space constraints, this result has been moved to  \cref{sec:variable-cost bandit}.
  \item We evaluate all algorithms empirically on the Nectar dataset \citep{zhu24starling7b} and show that they can learn a high-quality routing agent for many popular LLMs experts, such as GPT-3.5, GPT-4, LLaMA, and Mistral (\cref{sec:experiments}).
\end{enumerate}

\textbf{Technical Novelty:} A key challenge in our analysis is bounding the reward gap between the optimal and selected experts for prompts without feedback. Unlike standard linear bandits, limited feedback prevents immediate updates to the covariance matrix, making it difficult to bound regret using standard techniques. To address this in the full-information setting, we introduce a novel feedback strategy: at regular intervals, we select past prompts that maximize the determinant of the covariance matrix. This approach quickly reduces uncertainty in important directions and allows us to bound regret. We further show that periodic feedback selection performs nearly as well as hindsight-optimal choices. In the bandit setting, we extend this approach by maintaining separate confidence sets per expert to ensure accurate regret guarantees.

\textbf{Outline:} In \cref{sec:setting}, we state our problem and define our model. In \cref{sec:full-information,sec:bandit}, we describe our algorithms for the full-information and bandit settings, respectively, along with providing theoretical guarantees. 
In \cref{sec:variable-cost bandit}, we extend our bandit algorithm to the setting with varying costs of evaluating experts.
In \cref{sec:experiments}, we provide empirical results using our algorithms.
Finally, in \cref{sec:proofs}, we provide detailed proofs for all our results.

\section{Problem Setting}
\label{sec:setting}

\textbf{Notation:} We denote by $[K]$ the set $\{1,2,\dots,K\}$. We denote scalars and vectors by lowercase letters (say $x$). We denote matrices and fixed global parameters by capital letters (say $X$). For a vector $x$, $x_i$ denotes its $i^{\s{th}}$ entry. For an indexed vector $x_j$, $x_{j,i}$ denotes its $i^{\s{th}}$ entry. We let $\lr{x}_{A}$ be the weighted 2-norm $\sqrt{x^{T}Ax}$ with respect to a positive semi-definite matrix $A$. We define the corresponding inner product as $\ip{x,y}_{A} = x^{T}Ay$. $0_d$ and $I_d$ are the zero vector and identity matrix in $d$ dimensions, respectively. $\ca{N}(0_d,\Sigma)$ is the Gaussian distribution in $d$-dimensions with zero mean and covariance matrix $\Sigma$. For a matrix $A$, we write $A_i$ to denote its $i^{\s{th}}$ column. Let $\ca{B}^d \equiv \{x\in \bb{R}^d \mid \lr{x}_2 \le 1\}$ be the unit ball in $d$ dimensions.

\textbf{Problem Formulation:} We introduce our setting as a variant of a classic linear bandit \citep{abbasi-yadkori11improved}. We have $T$ rounds and $K$ LLM \emph{experts}. Each expert is indexed by $a\in [K]$ and associated with an unknown \emph{parameter vector} $\theta_a\in \ca{B}^{d}$. At each round $t\in [T]$, a \emph{prompt} arrives and we denote it by $x_t \in L\cdot\ca{B}^d$ for some $L > 0$. 
The prompt is then treated as \emph{context} at round $t$. 

Given a prompt $x_t \in \bb{R}^d$, the algorithm chooses an expert $a_t \in [K]$ and obtains its response. In the linear model, the \emph{expected reward} for the response of expert $a$ in round $t$ is $\ip{\theta_a, x_t}$. We denote the vector of all stochastic \emph{rewards} in round $t$ by $r_t = (r_{t, a})_{a = 1}^K$ and define each reward as
\begin{align}
  r_{t,a}
  = \ip{\theta_{a},x_t}+ \eta_{a,t}\,.
  \label{eq:reward}
\end{align} 
We assume that the noise $\{\eta_{a,t}\}_{a\in [K], t\in [T]}$ is independent, both across the experts and rounds, and sub-Gaussian with a variance proxy $1$. The linear model is realistic since the reward is computed as a 
linear head on top of a frozen transformer embedding. Specifically, $x_t$ is the embedding of the prompt produced by a transformer, and $\theta_a$ is the linear head for expert $a$. This is standard in reward modeling, with the only distinction being that we do not fine-tune the embeddings.

Further, at each round $t$, the context $x_t$ arrives, the algorithm selects 
an expert $a_t$, and feedback is optionally requested afterward. 
This ordering reflects a key constraint: feedback is expensive and the decision of whether to query it is made after the action, as part of the exploration strategy.
One may ask whether prompt evaluations could instead be performed before selecting $a_t$, by looking back at past contexts similar to 
$x_t$ and leveraging their outcomes to make a more informed decision. 
However, doing so would require additional feedback queries at every 
round, violating the budget constraint $m \ll T$. Our formulation 
instead leverages past feedback frugally: routing decisions are made 
with whatever has been learned up to round $t$, without retroactive 
evaluation of past prompts triggered by the current context. We 
consider this alternative as the \emph{NoLookBack} baseline in our 
experiments.

\textbf{Limited Feedback:} In our setting, the feedback is expensive \citep{li2024llms}, due to time or monetary constraints\footnote{The cost and latency of using LLMs as judges depends on the token counts and pricing of the service provider, such as Azure Databricks \citep{llm-judge-metrics, Agent-eval-cost}, or on the computational resources available to host such LLMs on premise.}, and thus limited. In the classic linear bandit, the noisy reward is typically observed partially or completely at each round $t$. \textit{The key difference in our setting is that the reward vector generated at any round $t$ is not immediately observed.} However, feedback—provided as observations of rewards—is crucial for improving the selection of experts over successive rounds \citep{lattimore19bandit}.

The algorithm has a budget of $m < T$ observations, meaning that it can observe at most $m$ rewards, either whole vectors or its entries, across the entire time horizon. Crucially, the algorithm can adaptively decide at which rounds to obtain feedback. If, at round $t \in [T]$, the algorithm decides to collect feedback, it may choose any round $s_t \leq t$ and observe the noisy reward vector $r_{s_t}$ or its entry $r_{s_t, a_{s_t}}$. Many prior works in the bandit literature studied limited feedback (\cref{sec:related work}). The main difference in our setting is that the algorithm not only selects an expert at each round but also chooses, exploiting the problem structure, when and for which past prompts to collect feedback. 

The expert with the highest expected reward in round $t$ given the prompt $x_t$ is
\begin{align*}
  a^{\star}_t
  = \displaystyle \argmax_{a\in [K]}\ip{\theta_a,x_t}\,.
\end{align*}
We define the \emph{cumulative regret} in $T$ rounds as
\begin{align}\label{eq:reg_simple}
  \s{Reg}(T)
  = \bb{E}\Big[\sum_{t=1}^{T} \ip{\theta_{a^{\star}_t},x_t} - \sum_{t=1}^{T} \ip{\theta_{a_t},x_t}\Big]\,,
\end{align}
where the expectation is with respect to the randomness of the algorithm. In the remainder of the paper, we study different forms of limited feedback that can be obtained in practice and analyze regret in these settings.

\section{Full-Information Feedback}
\label{sec:full-information}

We start with the full-information setting, where the agent can observe rewards of all experts at any past round. While this setting is simpler than the bandit setting in \cref{sec:bandit}, it already exhibits basic properties of its algorithm design, that the problem can be solved by choosing observations with the highest information gain at regular time intervals. The former guarantees sub-linear regret and the latter allows us to trivially satisfy the observation budget.

\subsection{Algorithm}
\label{sec:limfullfeed}

\begin{algorithm*}[t]
\small
  \caption{\limfullfeed: Limited full-information feedback for expert selection.}
  \label{alg:full_info}
  \begin{algorithmic}[1]
    \State Initialize $b \gets m / T$, $V_0 \gets \lambda I_d$, $\mathcal{S}_0 \gets \emptyset$; and $y_{0, a} \gets 0_{d}$, $\widehat{\theta}_{0, a} \gets 0_d$ for all $a \in [K]$
    \For{round $t =1, \dots, T$}
      \State Obtain prompt $x_t$ and choose expert
      \begin{align}
        \label{eq:estimate}
        \textstyle
        a_t = \argmax_{a\in [K]} \ip{\widehat{\theta}_{t-1,a}, x_t}
      \end{align} 
      \If{$\lfloor b(t-1) \rfloor < \lfloor bt \rfloor$}
        \State Find most informative past observation
        \begin{align}
          \label{eq:det}
          \textstyle
          s_t = \argmax_{\ell \in \, [t]\setminus \mathcal{S}_{t-1}\, } \s{det}( V_{t-1} + x_{\ell}x_{\ell}^{T}) 
        \end{align} 
        \State Observe reward vector $r_{s_t}$ and update all statistics
        \begin{align}
          V_t & \gets V_{t - 1} + x_{s_t} x_{s_t}^T\,, \quad
          \mathcal{S}_t \gets \mathcal{S}_{t - 1} + \{s_t\}
          \nonumber \\
          \forall a \in [K]:
          y_{t, a} & \gets y_{t - 1, a} + r_{s_t, a} x_{s_t}\,, \quad
          \widehat{\theta}_{t, a} \gets V_t^{-1} y_{t, a}
          \label{eq:confidence}
        \end{align}
      \Else 
        \State Update $V_t \gets V_{t-1}$, $S_t \gets S_{t-1}$; and $y_{t,a} \gets y_{t-1,a}$, $\widehat{\theta}_{t,a} \gets \widehat{\theta}_{t-1,a}$ for all $a\in [K]$
      \EndIf
    \EndFor
  \end{algorithmic}
\end{algorithm*}

Our algorithm \limfullfeed is presented in \cref{alg:full_info} and we describe it next. The inputs are $K$ LLM experts, the number of rounds $T$, the feedback budget $m$, and a hyperparameter $\lambda$. We also initialize all statistics for tracking reward models of all experts online (\cref{sec:setting}), such as the common covariance matrix $V_0$ and \emph{ordinary least squares (OLS)} estimates $\widehat{\theta}_{0, a}$ for all experts.

In any round $t \in [T]$, \limfullfeed observes a prompt $x_t$ and chooses the best expert $a_t$ based on its estimated mean reward in \cref{eq:estimate}. \limfullfeed may decide to obtain feedback. The feedback is obtained when $\lfloor b (t - 1) \rfloor < \lfloor bt \rfloor$ holds. Roughly speaking, this happens every $T / m$ rounds since $b = m / T$. By following this strategy, we trivially guarantee that the observation budget constraint is satisfied. When the algorithm decides to obtain feedback, it can select any past unobserved round. Any round can be observed at most once because two repeated observations would be identical and hence not independent. More formally, we denote the set of past rounds where the feedback was previously obtained by $S_t \subseteq [t]$ and let the algorithm choose any round in $[t] \setminus S_t$. We denote the chosen round by $s_t \in [t] \setminus S_t$ and the observed rewards by $r_{s_t}$. Since we are in the full-information setting, $r_{s_t}$ is a vector of the rewards of all experts. After the feedback is obtained, all statistics are updated, such as the OLS estimates for all experts, $\widehat{\theta}_{t,a}$ in \cref{eq:confidence}.

The technical novelty in our algorithm design is in how $s_t$ is chosen. We consider all past prompts $\{x_s\}_{s \leq t}$ and choose the one that maximally increases the determinant of the covariance matrix in \cref{eq:det}. The intuitive idea behind this choice is that this increases all eigenvalues of the covariance matrix uniformly and thus leads to uniformly decreasing confidence intervals in all previously observed directions, encoded by the embeddings of the prompts. In turn, this yields sub-linear regret.

We would like to comment on two more aspects of \limfullfeed. First, the best expert in \cref{eq:estimate} is chosen using the mean reward estimate. This is because in the full-information setting, all experts are trained on the same past prompts and hence have the same covariance matrices. In the bandit setting (\cref{sec:bandit}), we account for non-uniform data collection across the experts. Second, a naive implementation of \limfullfeed has a $O(d^3)$ per-round time complexity, due to inverting $d \times d$ matrices and computing their determinants. This can be reduced to $O(d^2)$ by using the Sherman-Morrison formula for the former and the matrix determinant lemma for the latter.

\subsection{Main Results}

Our goal is to minimize regret under the constraint of obtaining feedback at most $m \leq T$ times. The constraint is satisfied trivially (\cref{sec:limfullfeed}). Therefore, we only need to prove a regret bound. We start by borrowing standard assumptions from linear bandit analyses \citep[Chapter 19]{lattimore2020bandit}.

\begin{assumption}
\label{assum:magnitude} All expert parameters $\{\theta_a\}_{a \in [K]}$ satisfy $\lr{\theta_a}_2 \le 1$. All prompts $\{x_t\}_{t = 1}^T$ satisfy $\lr{x_t}_2 \le L$. We also assume that $\max_{a,b\in [k], t\in [T]} \langle \theta_{a}-\theta_b, x_t \rangle \le 1$. 
\end{assumption}

\begin{theorem}
\label{thm:full_information} Choose any $K$ experts, $d$ features, horizon $T$, budget $m < T$, and $\lambda > 0$. Suppose that \cref{assum:magnitude} holds. Then the regret of \limfullfeed is $\s{Reg}(T) = O(dTm^{-1/2}\log (TL))$.
\end{theorem}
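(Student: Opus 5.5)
The plan is to reduce the regret to a sum of confidence widths and then control that sum through the determinant-maximizing choice of $s_t$. The first step is a per-round decomposition. Because all experts share the design matrix $V_{t-1}$, the self-normalized confidence bound of \citet{abbasi-yadkori11improved}, together with a union bound over $a\in[K]$, gives the following: with probability at least $1-\delta$, for every $t$ and $a$,
\[
\lr{\widehat\theta_{t-1,a}-\theta_a}_{V_{t-1}}\le \beta,\qquad \beta=\sqrt{\lambda}+\sqrt{2\log(K/\delta)+d\log(1+mL^2/(d\lambda))}.
\]
The greedy rule \cref{eq:estimate} then yields
\[
\ip{\theta_{a_t^\star}-\theta_{a_t},x_t}\le 2\beta\lr{x_t}_{V_{t-1}^{-1}}.
\]
The confidence bound applies despite the adaptive choice of $s_t$, because $s_t$ depends only on past contexts and not on unobserved noise. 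Combining this with \cref{assum:magnitude} (each gap is at most $1$) and choosing $\delta=1/T$, we get $\s{Reg}(T)\le 1+2\beta\sum_t\min(1,\lr{x_t}_{V_{t-1}^{-1}})$.

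Next I split the horizon into $m$ blocks of length about $T/m$. Write $\tau_j$ for the feedback round closing block $j$, and $W_j=V_{\tau_j}$ for the matrix after that feedback. Every round $t$ in block $j+1$ uses $V_{t-1}=W_j$. The aim is to show that, for each such $t$,
\[
\lr{x_t}^2_{W_j^{-1}}\lesssim \text{(the gain at step $j{+}1$)},
\]
where the gain is the quantity $\max_\ell\lr{x_\ell}^2_{W_j^{-1}}$ over unobserved $\ell$, which is exactly what the greedy maximizes. By the matrix determinant lemma, maximizing $\s{det}(V+xx^T)$ is the same as maximizing $\lr{x}_{V^{-1}}$. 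Hence at the feedback round $\tau_{j+1}\ge t$ the chosen point satisfies
\[
\lr{x_{s_{\tau_{j+1}}}}_{W_j^{-1}}\ge \lr{x_t}_{W_j^{-1}},
\]
provided $t$ is still unobserved, which is automatic since $t$ lies in the current block and has not yet been queried. Summing over the block then gives
\[
\sum_{t\in\text{block } j+1}\min\!\left(1,\lr{x_t}_{W_j^{-1}}\right)\le \tfrac{T}{m}\min\!\left(1,\lr{x_{s_{\tau_{j+1}}}}_{W_j^{-1}}\right).
\]

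Finally I sum over blocks and apply Cauchy--Schwarz and the elliptical potential lemma to the sequence of observed points:
\[
\sum_j \min\!\left(1,\lr{x_{s_{\tau_{j+1}}}}_{W_j^{-1}}\right)\le\sqrt{m\cdot 2d\log(1+mL^2/(d\lambda))}.
\]
Putting the pieces together,
\[
\s{Reg}(T)\lesssim \beta\,\tfrac{T}{m}\sqrt{md\log(TL)}=O\!\left(dTm^{-1/2}\log(TL)\right),
\]
where the factor $\sqrt{\log K}$ inside $\beta$ is absorbed into the logarithmic terms. The first block, which has no data and $V=\lambda I$, contributes at most $T/m\le dT/\sqrt m$.

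I expect the main obstacle to be the block-comparison step: making sure the greedy max at $\tau_{j+1}$ genuinely dominates every unobserved round of block $j+1$, and not only rounds already seen. A related subtlety is that the maximizer may be an old, not a current, round. This does no harm, since the comparison only needs the maximizer's norm to be at least as large. Some care is also needed with the floor-based schedule, since block lengths are at most $\lceil T/m\rceil$, and with the fact that $\widehat\theta$ changes only at feedback rounds.
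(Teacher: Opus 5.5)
Your proposal is correct and is essentially the paper's proof. It uses the same $2\beta\|x_t\|_{V_{t-1}^{-1}}$ per-round bound and the same key fact: every round of the current block is still an unobserved candidate at the block-closing feedback round, so the determinant-maximizing pick dominates its width. The only difference is that you apply Cauchy--Schwarz over the $m$ observed points after the block comparison, and then the elliptical potential bound, whereas the paper applies Cauchy--Schwarz over all $T$ rounds first and then bounds the squared widths block-wise by log-determinant increments with AM--GM. Your explicit $\sqrt{\log K}$ is absorbed only when $\log K = O(d\log(TL))$, but the paper's $\beta$ silently drops this union-bound cost too, so this is an imprecision in the statement rather than in your argument.
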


Due to space constraints, we only sketch the proof. The detailed proof is in \cref{app:detailed_full_info}

 \begin{proof}[Proof Sketch]
     Consider a fixed expert $a\in [K]$ with unknown $\theta_a$ and a prompt $x_t$ at round $t$. Using the OLS estimate $\widehat{\theta}_{t,a}$ for $\theta_a$, the error in estimated mean reward given by $\ip{\theta_{a}-\widehat{\theta}_{t,a},x_t}$ scales as $\lr{x_t}_{V_t^{-1}}$. This error can be rewritten as $\log \s{det}(V_t+x_tx_t^{T})-\log \s{det} V_t$. In standard linear bandits, we update the covariance matrix $V_{t+1}=V_t+x_tx_t^{T}$. Hence, we can add up the error over all rounds to obtain a telescoping sum. However, in our setting with limited feedback, $V_t$ is not updated at every round and therefore the above does not hold. The key idea in our proof stems from Line 7 in \limfullfeed where we look back to update $V_t$ with the prompt that increases its determinant most. Hence we can still bound $\log \s{det}(V_t+x_tx_t^{T})-\log \s{det} V_t$ from above by $\log \s{det} V_{t+1}-\log \s{det} V_t$. This upper bound is identical for all the rounds between two consecutive feedback and associated updates to the covariance matrix. Finally, we solicit feedback at regular intervals with interval length of $T/m$ rounds. Therefore we still get a telescoping sum after adding the errors but the limited feedback leads to an extra multiplicative factor of $T/m$ in the telescoping sum. This leads to an additional multiplicative factor of $\sqrt{T/m}$ in the regret bound. 
 \end{proof}

Note that regret does not depend on the number of experts $K$ since in the full-information setting, we obtain responses for all experts jointly. Therefore, when $m=T$, we can obtain feedback for all rounds and the regret guarantee that is achieved is $O(d\sqrt{T}\log T)$. This is reminiscent of the standard regret guarantee achieved for linear bandits \citep{lattimore2020bandit}. The additional cost of limited feedback arises in the form of a multiplicative factor of $\sqrt{T/m}$ leading to higher cost with lesser feedback. We also prove the following lower bound (detailed proof is in \cref{app:lower_bound}).

\begin{theorem}\label{thm:lower_bound_full_info}
    Consider the online expert selection problem with limited full information feedback, $d$ features, $K$ experts, horizon $T$, 
    feedback budget of $m<T$. Suppose all observations are Gaussian random variables with noise variance $1$. Let prompts $\{x_t\}_{t=1}^{T}\in \{-1,+1\}^d$ and $\{\theta_a\}_{a\in [K]}\in \{-\delta,+\delta\}^d$ for $\delta=(md)^{-1/2}$. Then there exists an instance such that the regret incurred must satisfy
    \begin{align*}
        \s{Reg}(\s{T}) \ge \frac{T\sqrt{d}\exp(-4)}{8\sqrt{m}}.
    \end{align*}
\end{theorem}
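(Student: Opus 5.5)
We will prove the lower bound with the standard hypercube construction for linear bandits, adapted so that the number of informative observations is capped at $m$. It is enough to take $K=2$, with $\theta_2=-\theta_1$ (other experts can be made dominated copies), so that $\theta_1=\theta\in\{-\delta,+\delta\}^d$ and the regret in round $t$ equals $2|\ip{\theta,x_t}|$ if the wrong expert is chosen. To separate the coordinates we choose the contexts ourselves: each $x_t$ is drawn uniformly from $\{-1,+1\}^d$, independently across rounds. For each coordinate $i$ and sign vector $\theta$, let $\theta^{(i)}$ denote $\theta$ with its $i$-th coordinate flipped. The averaging argument over the hypercube (Lattimore--Szepesv\'ari, Ch.~24) then reduces the regret of any algorithm, averaged over the prior $\theta\sim\mathrm{Unif}\{\pm\delta\}^d$, to a sum over coordinates of the probability of ``getting coordinate $i$ wrong''.

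\textbf{Step 1 (regret lower bound per round).} Following Lattimore--Szepesv\'ari, we pass to an auxiliary per-coordinate decision. Writing $\hat\theta_t$ for the sign vector implied by the routing, the instantaneous regret is at least a constant times $\delta\sum_i \mathbb{P}(\text{sign}_i \text{ wrong})$, which we can interpret through the expected number of coordinates that disagree with $\theta$. If the direct reduction through $|\ip{\theta,x_t}|$ becomes awkward, we would use the cleaner variant in which the prompts are the scaled standard directions $\pm e_i$ (still consistent with $\{-1,+1\}^d$ up to rescaling) and charge regret $2\delta$ for each wrong sign. Either way the goal is
\[
\mathsf{Reg}(T)\ \gtrsim\ \delta\sum_{t=1}^T\sum_{i=1}^d \min\bigl(\mathbb{P}_\theta(E_{t,i}),\ \mathbb{P}_{\theta^{(i)}}(E^c_{t,i})\bigr),
\]
up to a factor that we expect to contribute the $\sqrt d$ (regret of order $d\delta\cdot\sqrt d$-scale per round, since $|\ip{\theta,x}|\approx\delta\sqrt d$ for random $x$).

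\textbf{Step 2 (information bound).} Here we use the feature of the model that matters: at round $t$ the algorithm has seen at most $m$ reward vectors in total. Each observation $r_s$ is Gaussian with mean $\pm\ip{\theta,x_s}$ per expert. For each pair $(\theta,\theta^{(i)})$, the observation of $r_s$ therefore contributes KL at most $\sum_a (2\delta x_{s,i})^2/2 = O(\delta^2)$. The chain rule for KL, together with the adaptive choice of $s_t$, bounds the KL between the laws of the history under $\theta$ and $\theta^{(i)}$ by $O(m\delta^2)=O(1/d)$ with the choice $\delta=(md)^{-1/2}$. Bretagnolle--Huber then gives that $\min(\mathbb{P}_\theta(E),\mathbb{P}_{\theta^{(i)}}(E^c))\ge \tfrac12\exp(-\mathrm{KL})$, which we expect to be bounded below by a constant such as $\tfrac12 e^{-4}$ once constants are tracked.

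\textbf{Step 3 (combining).} Summing over $t$ and $i$, and using the averaging over $\theta$ so that some fixed $\theta$ attains at least the average, we obtain regret of order $T\cdot\delta\cdot d$. Substituting $\delta=(md)^{-1/2}$ gives $T\sqrt d/\sqrt m$. The main obstacle is Step 1: relating the routing error to the per-coordinate sign errors with the right scaling. The regret $|\ip{\theta,x_t}|$ with random $x_t$ is only of order $\delta\sqrt d$, not $\delta d$, so we must show that one wrong coordinate sign costs about $\delta$ in expectation. We plan to do this by conditioning on the other coordinates and arguing that $\ip{\theta,x_t}$ lies near zero with constant probability, in the spirit of the anti-concentration $\Pr(|\sum_{j\ne i}\theta_jx_j|\le\delta)\gtrsim 1/\sqrt d$. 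This is delicate, so the fallback is the $e_i$-context construction above, which reduces the problem to $d$ independent two-point problems with gap $\delta$. In that fallback, each coordinate receives roughly $m/d$ observations, and the bound becomes $\delta\sqrt{m/d}\lesssim1$ per coordinate.
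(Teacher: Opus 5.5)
There is a genuine gap in how the regret scales, not just in the constants. With $\theta_2=-\theta_1=-\theta$ the learner makes one binary decision per round, and the stake of that decision is $2|\ip{\theta,x_t}|$. There is no ``sign vector implied by the routing''. So the coordinate-wise decomposition of the hypercube argument in Lattimore--Szepesv\'ari (Ch.~24) is not available: it needs the learner's action to be a point of $\{\pm1\}^d$ whose regret is additive over coordinates. In fact your construction cannot reach the target at all. For uniform $x_t$ one has $\bb{E}|\ip{\theta,x_t}|\le\lr{\theta}_2=\delta\sqrt d=m^{-1/2}$. Hence the learner that ignores all feedback and flips a fair coin has regret at most $T/\sqrt m$ for every $\theta$. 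This is below $T\sqrt d\,e^{-4}/(8\sqrt m)$ as soon as $d>64e^8$, which contradicts your Step~3 claim of order $T\delta d$. Your anti-concentration idea confirms this rather than rescuing it. Coordinate $i$ is pivotal only with probability $\Theta(d^{-1/2})$, so it contributes $O(\delta d^{-1/2})$ per round, and the sum over $i$ is $O(\delta\sqrt d)$, not $\delta d$. Your KL of only $O(1/d)$ is the same symptom: information is not the bottleneck, the per-round stakes are. The choice $\delta=(md)^{-1/2}$ is calibrated for a different construction.

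The fallback proves a different statement. No rescaling of $\pm e_i$ lies in $\{-1,+1\}^d$. Moreover, with the prescribed $\delta$ a wrong choice on prompt $e_i$ costs $2\delta$, so the regret is at most $2T/\sqrt{md}$. Your ``$\delta\sqrt{m/d}\lesssim1$'' amounts to taking $\delta\asymp\sqrt{d/m}$. That contradicts the statement and also needs $m\ge d^2$ for $\lr{\theta}_2\le1$.

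What is missing is a way to make the per-round stake of order $\delta d$ while $m$ full-information observations stay nearly uninformative. This is the maximum possible stake, since $|\ip{\theta_a-\theta_b,x}|\le2\delta d$, and the natural way to get it is with many experts. The paper's route is as follows:
\begin{itemize}
\item It keeps one prompt $x$ fixed for all rounds.
\item It uses the $d+1$ experts $\{\theta\in\{\pm\delta\}^d: d_h(\theta,x)\le1\}$. Each observation then reveals $d+1$ rewards, so the KL scales as $m\delta^2d$; this is where $(md)^{-1/2}$ comes from.
\item It builds alternatives by perturbing one coordinate of the prompt.
\item It writes the regret as $2\delta$ times a Hamming distance.
\end{itemize}
Be aware, though, that in that construction every expert is within Hamming distance $2$ of every prompt used. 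So it too caps the regret at $4T/\sqrt{md}$.

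A route that does give $\Omega(T\sqrt{d/m})$ with the stated $\delta$ is the following. Fix $x_t\equiv x$ and take $K\ge e^{2d}$ experts. Under hypothesis $j$, give expert $j$ the parameter $\delta x$, and give every other expert the vector $\delta x$ with $\lfloor d/2\rfloor$ signs flipped. The gap is $2\delta\lfloor d/2\rfloor\asymp\sqrt{d/m}$. The reference instance has all experts of the second type. Against it, the KL of at most $m$ full-information observations is at most $m\delta^2d^2/2=d/2$. Fano's inequality then gives $\frac1K\sum_j\bb{P}_j(a_t\ne j)\ge 2/5$ in every round.
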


\textbf{Discussion:} Note that there is a gap of $\sqrt{d}$ between the upper and lower bounds. This gap stems from the fact that we do not know the prompts in advance and therefore, at all rounds $t$ we need to bound $\ip{\theta_{a}-\widehat{\theta}_{t,a},x}$ for all possible prompts $x\in \bb{R}^d$ and all experts $a\in [K]$. Such bounds are obtained in online settings with correlated observed random variables via a tail inequality on self-normalized martingales \cite{abbasi2012online}. 

Intuitively, ensuring the error bound is small for all vectors in the $d$-dimensional space leads to a union bound over $d$ dimensions that in turn leads to the additional $\sqrt{d}$ factor in the upper bound. If on the other hand, the prompts $\{x_{t}\}_{t=1}^{T}$ was known, then we would only need a union bound over $\s{T}$ vectors instead of all vectors in $\bb{R}^d$. This removes the additional $\sqrt{d}$ factor from the regret upper bound to make it tight up to logarithmic factors. 

Next we argue that obtaining feedback at regular intervals leads to an optimal regret bound in $m$. Suppose that the algorithm knew the prompts $\{x_{t}\}_{t=1}^{T}$ in advance and could also decide the order in which to obtain feedback. Then the algorithm would choose a subset $S$ of $m$ most-informative prompts, obtain feedback, and learn the expert parameters from them. The regret of this approach would be $O(T \max_{t\in [T]}\lr{x_t}_{V^{-1}})$, where $\max_{t\in [T]}\lr{x_t}_{V^{-1}}$ is the maximum confidence interval width and $V=\sum_{i\in S} x_ix_i^T$. An optimal solution to this problem is known as the G-optimal optimal design \citep[Chapter 21]{lattimore2020bandit} and its maximum confidence interval width is $O(\sqrt{d / m})$. This leads to a regret of $O(T \sqrt{d / m})$ over $T$ rounds and completes our argument.

We can consider a simpler algorithm which does not look back and use past prompts. 

The algorithm requests feedback for the input prompts at the rounds it decided to obtain feedback. Notice that such a baseline algorithm (we will call \nolookback) might suffer linear regret if an adversary provides prompts at the feedback rounds that are in an orthogonal subspace to the prompts in remaining rounds.

\section{Bandit Setting} 
\label{sec:bandit}
In this setting, we consider bandit feedback. This means that unlike the full information setting, here an agent can observe the reward of only one expert, that which was chosen to generate the response of a prompt at any past round.  Each expert might receive feedback a different number of times (unlike in the full-information setting). 

We are able to guarantee sub-linear cumulative regret in this setting as well, while satisfying the budget on feedback trivially, by algorithm design.

\begin{algorithm*}[t]
\small
\caption{\limbanfeed: Limited bandit feedback for expert selection.}\label{alg:banditv2}
\begin{algorithmic}[1]
    \State Initialize $z \gets T/m$, $V_{0,a} \gets \lambda I_d$, $\mathcal{S}_{0, a} \gets \emptyset$; and $y_{0, a} \gets 0_{d}$, $\widehat{\theta}_{0, a} \gets 0_d$ for all $a \in [K]$.

\For{rounds $t=1,2,\dots,T$}
\State Obtain prompt $x_t$, choose expert $a_t$ and increase its counter by 1. 
    \begin{align}\label{eq:estimatev2}
    a_t &= \displaystyle \argmax_{a \in [K]} \ip{\widehat{\theta}_{t-1,a}, x_t} + \beta \lr{x_t}_{V_{t-1,a}^{-1}}\\
    & n_{a_t} \leftarrow n_{a_t}+1\nonumber   
    \end{align} 
    
\If {$n_{a_t} \ge z$} 
\State Find the most informative past observation for the expert $a_t$. Reset its counter. 
\begin{align}\label{eq:detv2}
        s_t &= \displaystyle \argmax_{s\in  \{s \in [t]: a_s = a\}\setminus S_{t-1,a}} \s{det}( V_{t-1,a_t} + x_{\ell}x_{\ell}^{T})\\
        & n_{a_t} \gets 0 \nonumber
\end{align} 

\State Observe reward $r_{s_t,a_t}$ for the expert $a_t$ and update the corresponding statistics for $a_t$.
   \begin{align}
          V_{t, a_t} & \gets V_{t-1, a_t} + x_{s_t} x_{s_t}^T\,, \quad
          \mathcal{S}_{t, a_t} \gets \mathcal{S}_{t-1, a_t} + \{s_t\}
          \nonumber \\
          \forall a \in [K]:
          y_{t, a_t} & \gets y_{t - 1, a_t} + r_{s_t, a_t} x_{s_t}\,, \quad
          \widehat{\theta}_{t, a_t} \gets V_{t, a_t}^{-1} y_{t, a_t}
          \label{eq:confidence}
        \end{align}

\Else
\State Update $V_{t,a} \leftarrow V_{t-1,a}$, $y_{t,a} \leftarrow y_{t-1,a}$, $\widehat{\theta}_{t,a}=\widehat{\theta}_{t-1,a}$ and $S_{t,a} \leftarrow S_{t-1,a}$ for all experts $a\in [K]$ .
\EndIf
\EndFor
\end{algorithmic}
\end{algorithm*}

\subsection{Algorithm}
Here we describe our proposed algorithm \limbanfeed (\cref{alg:banditv2}) that chooses experts to generate response in the bandit feedback setting. 
Since different experts can potentially get feedback  different number of times, \limbanfeed adaptively chooses the experts and the corresponding past prompts (routed to them) to collect the feedback. The key idea is to observe rewards for those experts with highest information gain, by appropriately considering confidence bounds in reward estimates.

\limbanfeed takes as input the same parameters as \limfullfeed along with the additional hyperparameter $\beta$ corresponding to the confidence width. In the bandit setting, we initialize the covariance matrix $V_{0,a}$ along with other hyperparameters separately for each expert $a \in [K]$. In Line 2, we also initialize a variable $z$ which is set to be the average number of rounds between feedback, namely $T/m$. 

At each round $t$, the prompt $x_t$ arrives as context. Given the input prompt $x_t$, \limbanfeed chooses the expert $a_t$ in \eqref{eq:estimatev2} to generate the desired response by computing the upper confidence bound on the reward for each expert, given the OLS parameter estimate, and picking the one with the highest upper confidence reward. 
We maintain a counter $n_a$ for every expert $a\in [K]$ that keeps track of the number of times an expert is chosen for generating response. Once an expert has been used $z$ times, feedback is solicited for that expert by choosing a past prompt (it had served) appropriately and the counter is reset. 

Clearly, experts that are chosen more frequently will get more feedback.
We show here that the choice of $z$ helps respect the overall feedback budget of $m$ rounds.  The number of rounds where an expert $a\in [K]$ has been used to generate a response in the time horizon $T$ is $\left|S_{T,a}\right|$, hence the number of times the $a^{\s{th}}$ expert has been evaluated is $\lfloor \left|S_{T,a}\right|/z \rfloor$. 

Setting $z\geq \frac{T}{m}$ helps satisfy the feedback budget $m$: 
\begin{align*}
   \sum_{a\in [K]} \lfloor \frac{\left|S_{T,a}\right|}{z} \rfloor \le \sum_{a\in [K]} \frac{\left|S_{T,a}\right|}{z} = \frac{T}{z} \le m \implies z \ge \frac{T}{m}.
\end{align*}

For getting feedback for an expert $a$, \limbanfeed chooses a prompt index among the past prompts that maximizes the increase in the determinant of the covariance matrix for the expert $a$ (see \eqref{eq:detv2}). We only choose prompts that have not yet been evaluated. 

We update the OLS estimate of the parameter vector $\widehat{\theta}_{t,a}$ for expert $a_t$  (see \eqref{eq:confidence}) by using the observed scalar reward $r_{s_t,a}$ for the chosen prompt. 

\subsection{Main Results}

We show our main result below:

\begin{theorem}\label{thm:bandit}
Consider the online expert selection problem with limited bandit feedback, $K$ experts, $d$ features, horizon $T$ and feedback budget $m<T$. Suppose Assumption \ref{assum:magnitude} is true. Then for any $\lambda>0$ and $\beta=\sqrt{\lambda}+\sqrt{6\log T+d\log (1+TL^2/d)}$, the regret of \limbanfeed is
\begin{align*}
\s{Reg}(T) = O(dTK^{1/2}m^{-1/2}\log (TL))   
\end{align*}

\end{theorem}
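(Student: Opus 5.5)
We follow the proof of \cref{thm:full_information}, applied separately to each expert and combined with the standard optimism argument for linear UCB. First we fix a good event $\mathcal{E}$ on which, for every expert $a\in[K]$ and every round $t$, $\lr{\theta_a-\widehat{\theta}_{t,a}}_{V_{t,a}}\le\beta$. The estimate $\widehat{\theta}_{t,a}$ is a ridge estimate built from the scalar rewards $r_{s,a}$ at the looked-back rounds $s\in S_{t,a}$. The choice of which round to observe depends only on past prompts and past observations, and never on the fresh noise $\eta_{a,s}$. Each round is observed at most once, and noise is independent across rounds and experts. So the observed noises form a martingale difference sequence with respect to the feedback filtration. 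The self-normalized tail inequality of \citet{abbasi2012online}, with a union bound over $K\le T$ experts, then gives $\mathcal{E}$ with probability $1-O(1/T)$ for the stated $\beta$. The factor $6\log T$ absorbs the union bound. On $\mathcal{E}^c$ the regret is at most $T$ by \cref{assum:magnitude}, which contributes $O(1)$.

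On $\mathcal{E}$, optimism gives the usual per-round bound
\begin{align*}
\ip{\theta_{a_t^\star}-\theta_{a_t},x_t}\le\min\bigl(1,2\beta\lr{x_t}_{V_{t-1,a_t}^{-1}}\bigr).
\end{align*}
We group the rounds by the chosen expert $a$. For expert $a$, split its $N_a=|\{t:a_t=a\}|$ rounds into blocks of $z=T/m$ consecutive uses, with a feedback update at the end of each block. Let $V^{(j)}_a$ denote the covariance during block $j$. This matrix is constant within the block. Every prompt $x_t$ in block $j$ was routed to $a$ and is unobserved at the end of the block. Hence it is a candidate in \eqref{eq:detv2}, so
\begin{align*}
\log\det(V^{(j)}_a+x_tx_t^T)\le\log\det V^{(j+1)}_a .
\end{align*}
Combining this with $\min(1,u)\le 2\log(1+u)$ gives $\min(1,\lr{x_t}^2_{(V^{(j)}_a)^{-1}})\le 2(\log\det V^{(j+1)}_a-\log\det V^{(j)}_a)$. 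Summing over the at most $z$ rounds in each block and telescoping over blocks yields
\begin{align*}
\sum_{t:a_t=a}\min\bigl(1,\lr{x_t}^2_{V_{t-1,a}^{-1}}\bigr)\le 2z\,d\log(1+TL^2/(d\lambda)) + z.
\end{align*}
The extra $z$ covers the final incomplete block, which receives no update; there we use the trivial bound of $1$ per round. Cauchy--Schwarz within expert $a$ then gives regret $O\bigl(\beta\sqrt{N_a\,z\,d\log(TL)}\bigr)$. Summing over experts with $\sum_a\sqrt{N_a}\le\sqrt{KT}$ and substituting $\beta=O(\sqrt{d\log(TL)})$ gives $O\bigl(d\sqrt{KT\cdot T/m}\,\log(TL)\bigr)=O(dTK^{1/2}m^{-1/2}\log(TL))$. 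The leftover incomplete blocks cost at most $Kz=KT/m$ in total. This is dominated by the main term whenever $K\le m$, and when $K>m$ the claimed bound exceeds $T$ and holds trivially.

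The main obstacle is the first step: justifying the confidence set when observations are chosen retrospectively and adaptively. We must check that the data-dependent choice of $s_t$ is measurable with respect to a filtration under which the newly revealed noise $\eta_{a_t,s_t}$ is still conditionally sub-Gaussian with mean zero. This holds because the noise at round $s_t$ was never revealed before. We plan to define the filtration indexed by feedback events rather than by rounds and to apply \citet{abbasi2012online} for each expert. A secondary subtlety is the block bookkeeping: the candidate set in \eqref{eq:detv2} contains all unobserved past prompts of expert $a$, not only those in the current block. Since this only enlarges the maximization, the determinant comparison above still holds.
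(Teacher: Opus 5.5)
Your proposal is correct and follows essentially the same route as the paper's proof. The shared steps are the per-expert confidence event and the optimistic per-round bound $2\beta\lr{x_t}_{V_{t-1,a_t}^{-1}}$. Next, the determinant-maximizing look-back bounds every within-block term $\log\det(V+x_tx_t^T)$ by the next $\log\det$ increment, and telescoping then costs a multiplicative factor $z=T/m$. Finally comes Cauchy--Schwarz; your per-expert version combined with $\sum_a\sqrt{N_a}\le\sqrt{KT}$ is equivalent to the paper's global one.

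You are more careful than the paper on two points: the filtration for retrospectively chosen observations, and the final, never-updated block of each expert. One caveat is shared with the paper. The self-normalized bound only certifies the stated $\beta$ when $\lambda$ is not too small (e.g.\ $\lambda\ge 1$), because the stated $\beta$ uses $d\log(1+TL^2/d)$ rather than $d\log(1+TL^2/(d\lambda))$.
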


The detailed proof is provided in Appendix \ref{app:bandit}, but we discuss our key ideas here. 

\begin{proof}[Proof Sketch]
    In the bandit setting, the covariance matrix for each expert is updated separately. However, the key idea for strategically choosing the prompt in history, given the expert, remains the same as in the full-information setting. To obtain feedback at round $t$ for expert $a$, we look back and choose the prompt for which the determinant of the covariance matrix increases the most, allowing us to suitably bound the error $\lr{x_t}_{V_{t-1,a}^{-1}}$ from above. Additionally, in our analysis, we use the fact that each expert is updated after being used z = $\lceil T/m \rceil$ times. This ensures that the error from an expert decreases over time, with the rate depending on how frequently the expert is invoked.
\end{proof}

Note that the number of feedback observations in the full-information setting is $K$ times that of the bandit setting for the same number of feedback rounds. 
Therefore, roughly speaking, in contrast to the full-information setting, the regret guarantee with the bandit feedback has an additional multiplicative factor of $\sqrt{K}$.

Further, when $m=T$ that is, we can obtain feedback for all rounds, Algorithm \ref{alg:banditv2} achieves a regret guarantee of $O(d\sqrt{TK}\log (TL))$. As in the full-information setting, the cost of limited feedback is a multiplicative factor of $\sqrt{T/m}$. Next, we prove a lower bound on the regret in the bandit feedback setting (detailed proof is in \cref{app:lower_bound}):

\begin{theorem}\label{thm:lower_bound_bandit}
    Consider the online expert selection problem with limited bandit feedback, $d$ features, $K$ experts, horizon $T$, 
    feedback budget of $m<T$. Suppose all observations are Gaussian random variables with noise variance $1$. Let prompts $\{x_t\}_{t=1}^{T}\in \{-1,+1\}^d$ and $\{\theta_a\}_{a\in [K]}\in \{-\delta,+\delta\}^d$ for $\delta=(md)^{-1/2}$. Then there exists an instance such that the regret incurred must satisfy
    \begin{align*}
        \s{Reg}(\s{T}) \ge \frac{TK\exp(-4)}{8\sqrt{m}}
    \end{align*}
\end{theorem}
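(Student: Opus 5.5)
We will prove \cref{thm:lower_bound_bandit} by reducing to the full-information lower bound of \cref{thm:lower_bound_full_info} and adapting its construction so that the budget is shared across experts. The plan is an Assouad-type (hypercube) argument. We embed $K$ independent hard subproblems, one per expert. The $m$ scalar bandit observations must be split among the experts, so some experts receive at most $m/K$ observations each. On those experts the algorithm cannot resolve the sign pattern of $\theta_a$ better than chance in most coordinates.

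\textbf{Construction.} Fix expert $1$ as a reference with known parameter, e.g.\ $\theta_1 = 0$, so that only the relative signs matter. Alternatively, pair experts so that the comparison $\ip{\theta_a-\theta_b,x_t}$ isolates one unknown vector. For each other expert $a$, draw $\sigma_a\in\{-1,+1\}^d$ and set $\theta_a=\delta\sigma_a$. Prompts $x_t\in\{-1,+1\}^d$ are chosen so that every coordinate of every expert influences the best-expert decision. For example, rounds cycle through the experts, and in rounds devoted to expert $a$ the prompt forces a choice between $a$ and the reference, with $\ip{\theta_a,x_t}$ of order $\delta$ times the agreement of $x_t$ with $\sigma_a$.

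\textbf{Regret decomposition.} Following the full-information proof, regret in rounds devoted to expert $a$ is lower bounded by $\delta$ times the number of coordinates $i$ where the algorithm's implicit estimate of $\sigma_{a,i}$ is wrong, summed over those rounds. Flipping a single coordinate $\sigma_{a,i}$ changes the law of the observations only through the rewards of expert $a$ that were actually observed. By Gaussianity, the KL divergence between the two hypotheses is at most $2\delta^2 \bb{E}[N_a]$, where $N_a$ is the number of observations of expert $a$ (since $|x_{t,i}|=1$). Pinsker's inequality (or Bretagnolle--Huber, which produces the $\exp(-4)$ constant) then says the probability of a wrong sign decision is at least $\tfrac14\exp(-2\delta^2\bb{E}N_a)$.

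\textbf{Combining and main obstacle.} Averaging over the hypercube gives
\begin{align*}
\s{Reg}(T)\gtrsim \sum_a \frac{T}{K}\, d\delta \exp\bigl(-c\,\delta^2 \bb{E}N_a\bigr),
\end{align*}
subject to $\sum_a \bb{E}N_a\le m$. By convexity of the exponential, the worst case for the algorithm is the even split $\bb{E}N_a = m/K$. The main obstacle is tuning the scale so that the exponent remains $O(1)$ while the prefactor produces $TK/\sqrt m$. With $\delta=(md)^{-1/2}$ as stated, the exponent is at most $d\,\delta^2\cdot m = 1$ per expert. This step requires care, because bounding the exponent uniformly in this way discards the $1/K$ gain from splitting the budget; the bookkeeping is delicate. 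The $K$ factor in the bound must come from summing the per-expert losses across the $K$ experts, each of which is informationally starved. The $\sqrt d$ of the full-information bound is absorbed by the normalization of $\delta$.

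I would first verify that the per-round gap and the per-expert sum yield $\frac{TK}{8\sqrt m}e^{-4}$. If that fails, I would rescale which rounds are devoted to each expert so that every expert's rounds contribute $T/\sqrt m$.
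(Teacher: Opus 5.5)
There is a genuine gap, at exactly the step you flag as the ``main obstacle,'' and it cannot be closed within your construction. Each expert's hard subproblem lives only on its $T/K$ devoted rounds, so your bound evaluates to $\sum_{a}\frac{T}{K}\,d\delta\, e^{-O(1)} = \Theta(Td\delta)=\Theta(T\sqrt{d/m})$. The factor $K$ from summing over experts is cancelled by the $1/K$ share of rounds, and no $K$ survives.

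Exploiting the budget split in the exponent does not help either. With the prescribed $\delta=(md)^{-1/2}$, the per-coordinate KL is $2\delta^2\bb{E}N_a\le 2\delta^2 m=2/d$ (not $d\delta^2 m$), so it is already $O(1)$. Information is not the bottleneck; the prefactor is. Starving experts of feedback therefore cannot produce a factor $K$.

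Independently, the coordinatewise decomposition ``regret $\ge\delta\cdot\#\{i:\text{wrong sign of }\sigma_{a,i}\}$'' comes from the linear-bandit hypercube bound, where the learner chooses the sign vector $A_t\in\{-1,+1\}^d$. Here the learner picks one index among $K$ on a prompt it observes. In a round pitting $a$ against a reference, the loss is a single term $\delta|\ip{\sigma_a,x_t}|\,\mathbf{1}[\text{wrong choice}]$. This is of order $d\delta$ only if $x_t=\pm\sigma_a$, and then the observed context reveals $\sigma_a$ up to sign. A prompt also cannot force a two-way choice, since other experts $b$ with random $\sigma_b$ may beat both. Finally, $\theta_1=0\notin\{-\delta,+\delta\}^d$.

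The paper gets $K$ differently and does not split the budget at all. It fixes the all-ones prompt and takes the $K$ experts to be its sign flips in the first $K$ coordinates. It then reruns the full-information Bretagnolle--Huber averaging argument over a $K$-dimensional hypercube of prompt flips, with total KL of order $m\delta^2$. So $K$ is the hypercube dimension. The $\sqrt d$ disappears only because the proof then substitutes $\delta=1/\sqrt m$ rather than the stated $(md)^{-1/2}$.

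Your fallback of rescaling rounds cannot rescue the statement as written. For $x_t\in\{-1,+1\}^d$ and $\theta_a,\theta_b\in\{-\delta,+\delta\}^d$ one has $\ip{\theta_a-\theta_b,x_t}\le 2d\delta$. Hence every admissible instance has $\s{Reg}(T)\le 2T\sqrt{d/m}$, which is smaller than $\frac{TK\exp(-4)}{8\sqrt m}$ as soon as $K>16e^{4}\sqrt d$. Any correct argument must therefore change $\delta$, as the paper's sketch does, or restrict $K$ relative to $\sqrt d$. Within the stated parameters, your bookkeeping can at best give something of order $T\sqrt{d/m}$, with no $K$. Even that would require a valid per-round loss argument in place of the coordinatewise one.
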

\begin{figure*}[t!]
    \centering
    \subfigure[Full Information Setting]{
    \includegraphics[scale=0.3]{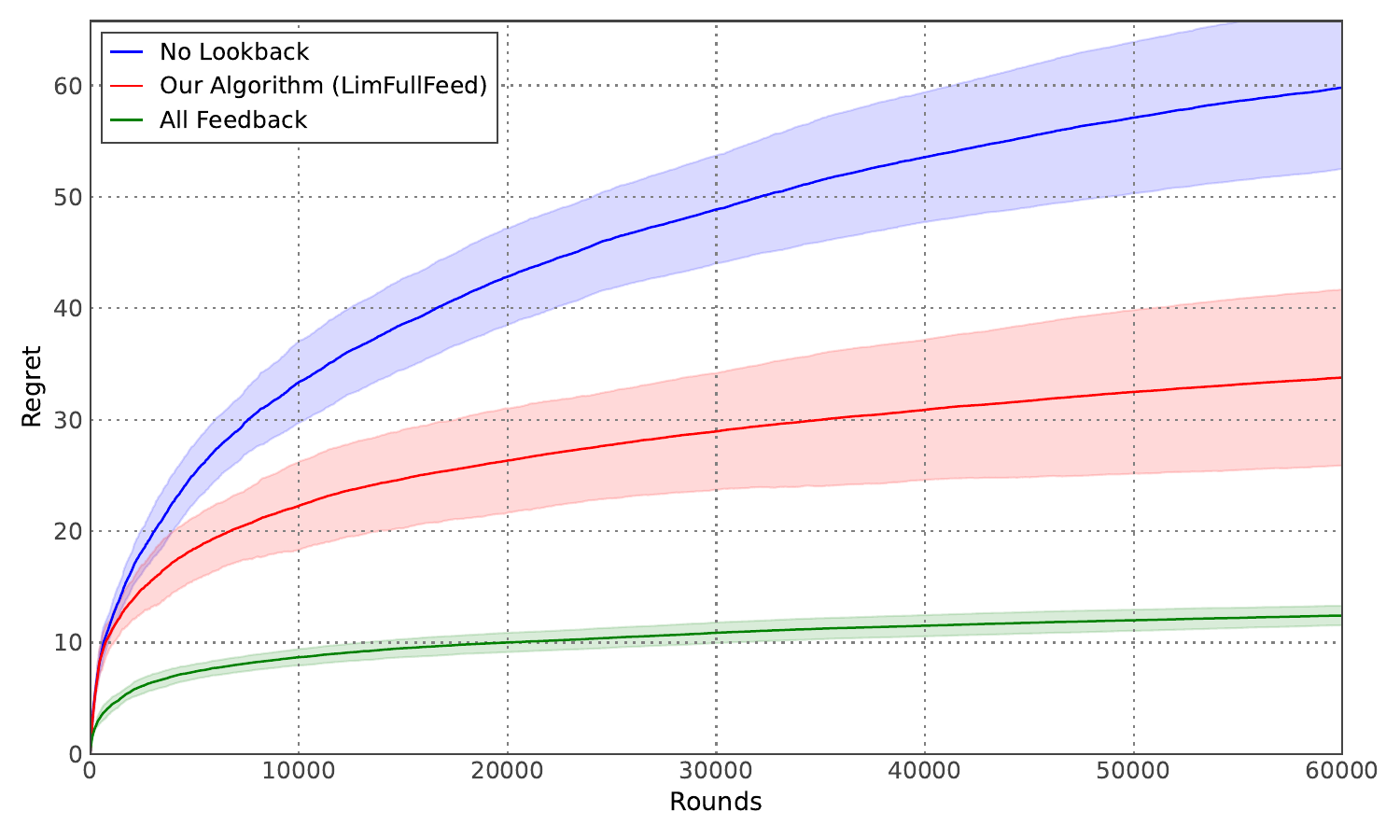}}
    \label{fig:full_information}
    \hfill
    \subfigure[Bandit Setting]{
    \includegraphics[scale=0.3]{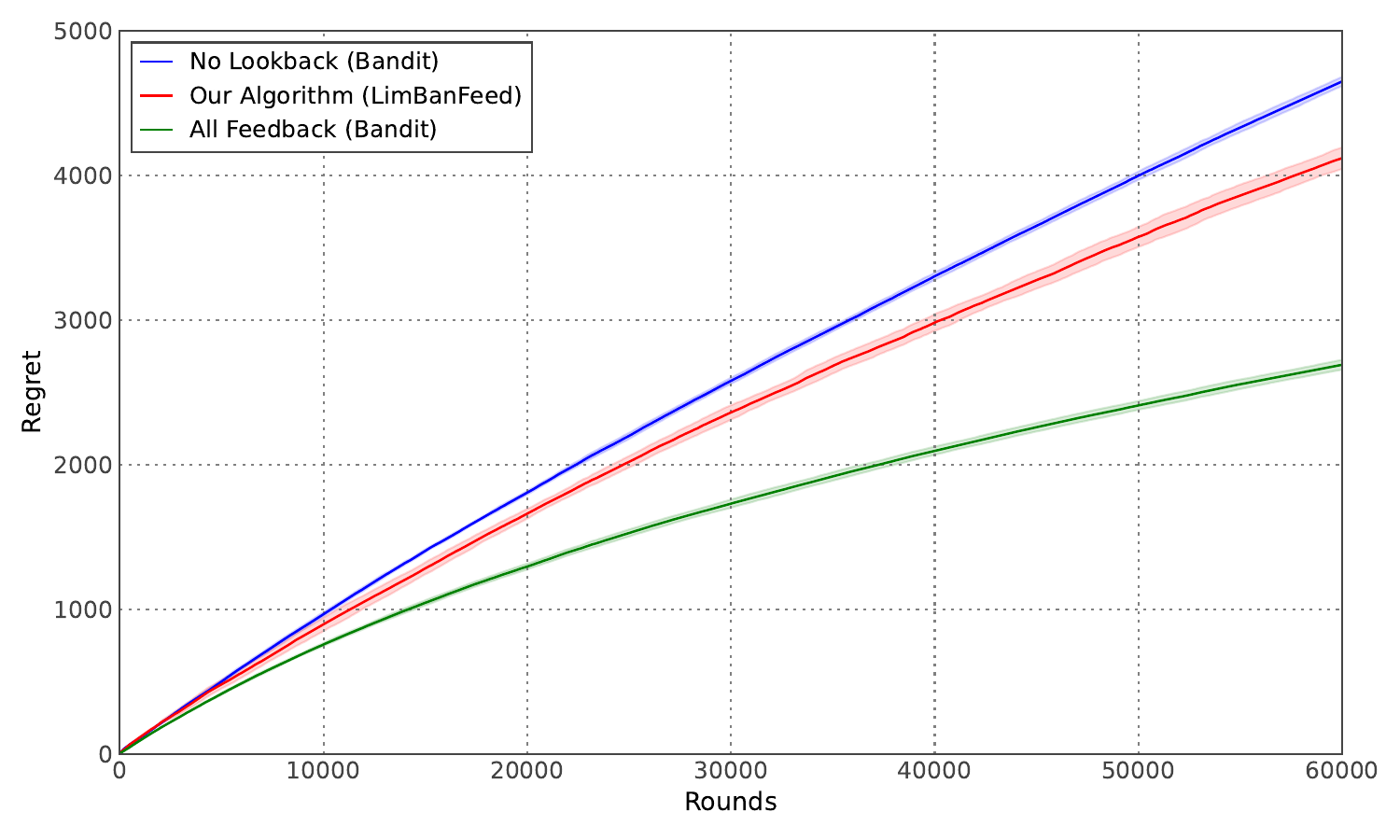}
    \label{fig:bandit}
    }
    \caption{\small 
    Results comparing our approaches to the other methods (RouterBench):
    (a) \nolookback (evaluates prompt at round when feedback is requested) and (b) \allfeedback (observes feedback at all rounds). Clearly, \limfullfeed has better regret guarantees than \nolookback by careful choice of feedback. \allfeedback suffers the smallest regret due to more data.
    }
    \label{fig:results-router-bench-full-info}
\end{figure*}
There is a gap of \( d/\sqrt{K} \) between the upper and lower bounds in the bandit setting. This arises because the lower-bound analysis reduces to hypothesis testing in a \(K\)-dimensional subspace. If we were to ignore the feature structure and treat the experts as arms in a standard multi-armed bandit setting, we could establish a lower bound of \( \Omega(T\sqrt{K/m}) \), which is looser by a factor of \( \sqrt{K} \).  
Similar to the full-information setting, the upper bound includes an additional factor of \( \sqrt{d} \) due to the algorithm’s lack of knowledge about the prompt vectors, requiring a union bound over all vectors in the \( d \)-dimensional space. This extra factor can be eliminated if the algorithm has prior knowledge of the prompts. The remaining gap of \( \sqrt{d/K} \) persists because the lower bound analysis is based on hypothesis testing with \( K \) vectors and may be further improved. It will be an interesting direction of future work to improve the lower bound in the bandit setting.

\section{Experiments}
\label{sec:experiments}
We evaluate our approach on large-scale LLM routing benchmarks.

\paragraph{Datasets and Baselines.}
We consider two widely used routing benchmarks: RouterBench\citep{hu2024routerbench} and Nectar\citep{zhu24starling7b}. Both datasets provide prompt-level evaluations across multiple LLM experts, enabling controlled simulation of online routing with ground-truth rewards. We follow a standard protocol and sample a subset of prompts to simulate an online interaction stream. 

We compare against two representative baselines. \textbf{NoLookBack} requests feedback only for the current round when querying, without leveraging past data. This corresponds to a naive strategy that ignores the structure of the problem. \textbf{AllFeedback} assumes access to feedback at every round and serves as a performance upper bound.

We simulate an online routing process over a fixed horizon $T$, with a feedback budget $m \ll T$. At each round, the algorithm selects an expert based on the observed context, and feedback is collected according to the algorithm's strategy. We report cumulative regret with respect to the best expert in hindsight. All results are averaged over 5 runs. Details are provided in Appendix~\ref{sec:exp_details}.

\begin{table*}[t]
  \centering
  \caption{Cumulative regret with respect to feedback budget $m$
  on Nectar ($K{=}6$, $d{=}40$, $T{=}60000$).}
  \label{tab:mscaling}
  \begin{tabular}{r  rr  rr}
    \toprule
    & \multicolumn{2}{c}{Full-information} &
      \multicolumn{2}{c}{Bandit} \\
    \cmidrule(lr){2-3}\cmidrule(lr){4-5}
    $m$ &
    LimFullFeed & NoLookBack &
    LimBanFeed & NoLookBack \\
    \midrule
    1000  & $672.9 \pm 37.4$ & $\mathbf{631.6 \pm 22.1}$ &
               $\mathbf{3583.4 \pm 68.0}$ & $3789.6 \pm 30.4$ \\
    2000  & $441.4 \pm 42.6$ & $\mathbf{408.0 \pm 26.2}$ &
               $\mathbf{3084.1 \pm 25.6}$ & $3580.0 \pm 73.3$ \\
    5000  & $\mathbf{214.1 \pm 7.7}$ & $218.7 \pm 10.6$ &
               $\mathbf{2295.7 \pm 42.3}$ & $3020.7 \pm 17.2$ \\
    10000 & $\mathbf{128.0 \pm 8.7}$ & $129.0 \pm 11.0$ &
               $\mathbf{1839.1 \pm 29.1}$ & $2538.4 \pm 22.2$ \\
    20000 & $\mathbf{73.6 \pm 6.0}$ & $75.8 \pm 6.1$ &
               $\mathbf{1567.9 \pm 13.4}$ & $2090.7 \pm 21.2$ \\
    \bottomrule
  \end{tabular}
\end{table*}

\paragraph{Main Results.}
Figure~\ref{fig:results-router-bench-full-info} and ~\ref{fig:combined} show the regret as a function of the number of rounds in both the full-information and bandit settings on RouterBench and Nectar. We observe that \limfullfeed consistently outperforms \nolookback. Interpreting regret as the cumulative number of suboptimal decisions, the gap between the two methods becomes substantial over time. In particular, by the end of the horizon ($T = 60000$), \limfullfeed achieves noticeably lower regret than \nolookback, indicating a significantly smaller number of mistakes. This improvement highlights the importance of selecting informative prompts when querying feedback. By looking back and choosing past observations that maximize information gain, \limfullfeed reduces uncertainty more efficiently and accelerates learning. In contrast, \nolookback queries feedback only at the current round, which leads to less informative data collection and slower error reduction. We also observe that the variance of \allfeedback is consistently smaller across rounds. This is expected, as \allfeedback has access to substantially more feedback, resulting in more stable estimates and reduced variability across runs.

The same trend holds in the bandit setting, where \limbanfeed consistently achieves lower regret than \nolookback, with the gap increasing over time. As in the full-information setting, \allfeedback exhibits lower variance due to the larger amount of available data.

Finally, regret in the full-information setting is uniformly lower than in the bandit setting. This aligns with our theoretical analysis, as the full-information setting provides richer feedback per observation, enabling faster reduction of uncertainty.

\textbf{Effect of the Feedback Budget.}
Table~\ref{tab:mscaling} shows that regret decreases as $m$ increases, consistent with the $\sqrt{T/m}$ dependence in our theory. In the bandit setting, \limbanfeed consistently outperforms \nolookback, with the gap widening for larger $m$. In the full-information setting, the gap is smaller for small $m$, since each feedback reveals all experts, but \limfullfeed becomes competitive as $m$ increases.

\textbf{Runtime Analysis.}
Table ~\ref{tab:runtime} shows that both methods scale approximately linearly with T. \limbanfeed scales linearly with K, while \limfullfeed is largely independent of K due to its shared covariance structure.

\section{Conclusions}
\label{sec:conclusions}

In this paper, we tackled adaptive prompt routing to LLM experts in an online learning setting with limited feedback. We framed this as a contextual bandit problem and introduced new algorithms for both full-information and bandit feedback settings. Our key innovation is showing when and where to observe feedback, which is expert-dependent in the bandit setting. Theoretical guarantees on regret show near-optimal learning rates, and empirical evaluations confirm our methods' effectiveness. Future work could explore dynamic expert availability, varying prompt distributions, and lower bounds in the bandit setting to refine regret minimization strategies.

\section*{Limitations}
We assume that feedback for past prompts can be obtained retrospectively, which may require storing model outputs and relying on noisy human or LLM-based judgments. Our experiments use static benchmarks to simulate online routing, which may not fully capture non-stationary user behavior or deployment constraints.

\bibliography{acl}

\appendix
\clearpage
\onecolumn
\section*{Appendix}
\setlength{\jot}{4pt}
\section{Bandit Feedback with Variable Costs}
\label{sec:variable-cost bandit}

We can also extend Algorithm \ref{alg:banditv2} to the setting where the $K$ experts have different costs of being evaluated and there is a budget on the overall cost. More precisely, suppose the total cost budget is $m$ and the cost of evaluation of the $j^{\s{th}}$ expert is $z_j$. Without loss of generality, we can assume that $z_1 \ge z_2 \ge \dots \ge z_{K}$. For the expert $j$ with cost $z_j$, we evaluate it after it has been used $z_j \cdot \widetilde{z}$ times for some $\widetilde{z}>0$. 

In other words, the experts is evaluated with a frequency that is proportional to their cost. In Algorithm \ref{alg:banditv2}, each expert had the same cost and therefore, the strategy of evaluation is identical for every expert. Now, given the budget on the feedback rounds $m$, $\widetilde{z}$ can be computed as follows: Again, recall that the 
 number of rounds where the expert $a$ has been used to generate a response in the entire time horizon is $\left|S_{T,a}\right|$. 
 In that case, the number of times the $a^{\s{th}}$ expert has been evaluated is $\lfloor \left|S_{T,a}\right|/(z_j \cdot \widetilde{z}) \rfloor$. We need the total cost to be smaller than $m$, hence, 
\begin{align*}
   &\sum_j \lfloor \frac{\left|S_{T,a}\right|}{z_j \cdot \widetilde{z}} \rfloor \cdot z_j  \le \sum_j \frac{\left|S_{T,a}\right|}{z_j \cdot \widetilde{z}} \cdot z_j \\
   &= \sum_j \frac{\left|S_{T,a}\right|}{\widetilde{z}} = \frac{T}{\widetilde{z}} \le m \implies \widetilde{z} \ge \frac{T}{m}.
\end{align*}
Now the entire analysis proceeds in the same way as the proof for Theorem \ref{thm:bandit}. The only difference lies in the non-uniform rate at which each expert is updated. More precisely, instead of a common interval length of $z$ for every expert $a\in [K]$ as in Algorithm \ref{alg:banditv2}, now we have an interval length of $z_a\cdot \widetilde{z}$ for the $a^{\text{th}}$ expert. Going through the same set of calculations, we can show that the regret guarantee in this case is going to be 
\begin{align*}
    \s{Reg}(T) = O\Big(Tm^{-1/2}\sqrt{d\beta\Big(\sum_j z_j\Big)\log\Big(\frac{d\lambda+m}{d}\Big)}\Big)
\end{align*}
Substituting the value of $\beta_{m/K}$ as in Theorem \ref{thm:bandit}, the final regret guarantee can be written as $$\s{Reg}(T) = O(dTm^{-1/2}(\sum_j z_j)^{1/2}\log T).$$
Notice that when all the costs are $1$, we get the same regret guarantee as in Theorem \ref{thm:bandit}. Further, as is common in this framework, the additional cost of limited feedback comes in the form of a multiplicative factor of $\sqrt{T/m}$.

\section{Detailed Proofs}\label{sec:proofs}

\subsection{Proof of Theorem \ref{thm:full_information}}\label{app:detailed_full_info}

\begin{proof}[Proof of Theorem \ref{thm:full_information}]

\textit{Notations:} We start by setting up some notations. Let $O_{t} \subseteq [t]$ denote the set of rounds until (including) round $t$ in which the algorithm chooses to get feedback. Note that $O_t \subseteq O_{t'}$ for all $t' \ge t$. For the set of prompts evaluated until round $t$, recall $S_t$ to be the set of past rounds when the prompts were provided as input to the algorithm.  Let us denote $\tau_{t} \triangleq \max \{\ell \in [t - 1]: \ell \in O_{t} \wedge (\ell - 1) \notin O_{t}\}$ denote the last round before round $t$ when the experts were updated. We denote $a \wedge b = \min(a,b)$.

We can decompose the RHS in \eqref{eq:reg_simple} as follows:

\begin{align*}
    \sum_{t=1}^{T} \ip{\theta_{a^{\star}_t},x_t} - \sum_{t=1}^{T} \ip{\theta_{a_t},x_t} = \sum_{t=1}^{T} \ip{\theta_{a^{\star}_t} -\widehat{\theta}_{t,a^{\star}_t},x_t} + \sum_{t=1}^{T} \ip{\widehat{\theta}_{t,a^{\star}_t}-\widehat{\theta}_{t,a_t},x_t} + \sum_{t=1}^{T} \ip{\widehat{\theta}_{t,a_t}-\theta_{a_t},x_t}.
\end{align*} 

Note that the second term is negative. This is because, by choice of our algorithm, we have picked the expert index $a_t$ at each round $t$ such that $\ip{\widehat{\theta}_{t,a^{\star}_t}-\widehat{\theta}_{t,a_t},x_t} \le 0$. Therefore, we can ignore the second term when bounding the LHS from above and then apply the Cauchy Schwarz inequality

Note that the covariance matrices for all experts remain identical throughout the time horizon. We  denote the covariance matrix of all experts at the beginning of round $t$ by $V_{t-1}$. Recall that feedback is solicited at every interval of $T/m$ rounds. Hence, the covariance matrix for all experts remains unchanged between two distinct feedback collections. From standard guarantees on confidence sets regarding the OLS estimator in the linear model (see Chapter 19 in \cite{lattimore2020bandit}), we have the following at every round $t\in [\s{T}]$ and every expert $a\in [K]$ with probability at least $1-o(T^{-2})$: 
\begin{align}\label{eq:high_prob}
    \lr{\theta_t - \widehat{\theta}_{t,a}}_{V_{t-1}} \le \beta  \text{ where } \beta = \sqrt{\lambda}+\sqrt{6\log T+d\log \Big(\frac{d\lambda+mL^2}{d\lambda}\Big)}.
\end{align}
Denote the event $\ca{E}$ where \eqref{eq:high_prob} is true for all rounds $t\in [T]$ and all experts $a\in [K]$.
 Moving forward, we can now condition on the above high probability event $\ca{E}$. 
 We now show the following for any round $t$:
\begin{align}\label{eq:conf}
     \ip{\theta_{a^{\star}_t},x_t} -  \ip{\theta_{a_t},x_t} \le  \lr{\theta_{a^{\star}_t} -\widehat{\theta}_{t,a^{\star}_t}}_{V_{t-1}}\lr{x_t}_{V_{t-1}^{-1}}+ \lr{\widehat{\theta}_{t,a_t}-\theta_{a_t}}_{V_{t-1}}\lr{x_t}_{V_{t-1}^{-1}} \le  2\beta  \lr{x_t}_{V_{t-1}^{-1}} = 2\beta  \lr{x_t}_{V_{\tau_t}^{-1}}.
\end{align} 
The last step follows from the fact that by definition $\tau_{t,a_t}$ is the last round (before round $t$) when the expert $a_t$ had been updated. 
Note that for each round $t$, we also have $\ip{\theta_{a^{\star}_t},x_t} - \ip{\theta_{a_t},x_t} \le 1$ from the assumption in theorem statement.  Therefore, we can combine to show
\begin{align*}
    \sum_{t=1}^{T} \ip{\theta_{a^{\star}_t},x_t} - \sum_{t=1}^{T} \ip{\theta_{a_t},x_t} \le 2\beta\sum_{t=1}^{T}  \Big(1 \wedge\lr{x_t}_{V_{t-1}^{-1}}\Big) \le 2\sqrt{T\beta\sum_{t=1}^{T}\Big(1 \wedge\lr{x_t}^2_{V_{t-1}^{-1}}\Big)}
\end{align*} 
where the final step follows from application of Cauchy-Schwarz inequality. Now, continuing from \eqref{eq:conf}, for any round $t$, we use the fact that for any $u\ge 0$, we have $u \wedge 1 \le 2\log(1+u)$:
\begin{align*}
    1 \wedge\lr{x_t}^2_{V_{\tau_t}^{-1}} &\le  \log \Big(1+\lr{x_t}^2_{V_{\tau_t}^{-1}}\Big) =  \log \s{det}\Big(I+V_{\tau_t}^{-1/2}x_tx_t^{T}V_{\tau_t}^{-1/2}\Big) \\
    &=  \log \s{det}\Big(V_{\tau_t}+x_tx_t^{T}\Big) -  \log \s{det} V_{\tau_t} 
\end{align*}
Now, consider a round $t\in O_T$ where feedback has been observed for some prompt in the history. As we defined before, $\tau_t$ corresponds to the previous round in history where feedback has been observed. From Algorithm \ref{alg:full_info}, recall that $s_t$ was the prompt chosen at round $t\in O_T$ for feedback following which we had updated $V_{t-1}=V_{\tau_t}+x_{s_t}x_{s_t}^{T}$. Further, recall that the prompt index $s_t$ at round $t$ was chosen in the following way:
\begin{align*}
    s_t = \s{argmax}_{\ell \in \, [t]\setminus \mathcal{S}_{t-1}\, } \s{det}( V_{t-1} + x_{\ell}x_{\ell}^{T}) = \s{argmax}_{\ell \in \, [t]\setminus \mathcal{S}_{t-1}\, } \s{det}( V_{\tau_t} + x_{\ell}x_{\ell}^{T})
\end{align*}
 implying that the prompt $s_t$ increases the determinant of the covariance matrix $V_{\tau_t}$ the most. Thus, for every round $r\in [\tau_t+1,t]$, we must have $\s{det}(V_{r} + x_{r}x_{r}^{T}) = \s{det}(V_{\tau_t}+x_{r}x_{r}^{T}) \le \s{det}(V_{\tau_t}+x_{s_t}x_{s_t}^{T})$. Hence, we have
 \begin{align*}
     \sum_{r\in [\tau_t+1,t]} \log \s{det}(V_{r} + x_{r}x_{r}^{T}) \le \lceil T/m \rceil \log \s{det}(V_{\tau_t}+x_{s_t}x_{s_t}^{T}) = \lceil T/m \rceil \log \s{det} V_{t-1}.
 \end{align*}
Therefore, using the fact that the covariance matrix remains unchanged for $T/m$ rounds, we can write 
\begin{align*}
    \sum_{t=1}^{T}\Big(1 \wedge\lr{x_t}^2_{V_{\tau_t}^{-1}}\Big) &\le \lceil T/m \rceil \sum_{t\in O_{T}} \Big(\log \s{det} V_{t} - \log \s{det} V_{\tau_t}\Big)+Tm^{-1} \\
    &= \lceil T/m \rceil \Big(\log \s{det} V_{T} - \log \s{det} V_{0}\Big)+Tm^{-1}.
\end{align*}

Notice that $V_{t-1}=\sum_{t\in O_T} x_{s_t}x_{s_t}^{T}$, $\left|O_T\right|=m$ (since we are getting feedback for $m$ rounds). Moreover, for all prompts $\{x_t\}_{t\in [T]}$, we have $\s{Tr}(x_tx_t^T) = \lr{x_t}_2^2 \le 1$ since all prompts are within the unit ball. 
Now, we use the AM-GM inequality and linearity of trace operation to show 
\begin{align*}
    \s{det} V_{T} \le \Big(d^{-1} \s{Tr}(V_{T})\Big)^d \le \Big(d^{-1} \Big(\s{Tr}(V_0)+m\Big)\Big)^d  
\end{align*}
We have $\s{Tr}(V_0) \le d\lambda$ and therefore, we bound the regret conditioned on the event $\ca{E}$ (denote by $\s{Reg}(T)\mid \ca{E}$) as 
\begin{align*}
    \s{Reg}(T)\mid \ca{E} = O\Big(Tm^{-1/2}\sqrt{d\beta\log\Big(\frac{d\lambda+m}{d}\Big)}\Big)
\end{align*}
When the event $\ca{E}$ is false, then the regret can be bounded by the $2T$ (worst-case). Therefore the final regret can be written as 
\begin{align*}
    \s{Reg}(T) = \s{Reg}(T)\mid \ca{E} + 2T\Pr(\ca{E}^c) = \s{Reg}(T)\mid \ca{E}+o(T^{-1}).
\end{align*}
Now, we can substitute the value for $\beta$ to obtain the final theorem statement.
\end{proof}

\subsection{Proof of Theorem \ref{thm:bandit}}\label{app:bandit}

\begin{proof}[Proof of Theorem \ref{thm:bandit}]

\textit{Notations:} We start by introducing new notations. For any expert $a\in [K]$, let $O_{t,a} \subseteq [t]$ denote the set of rounds until round $t$ in which the algorithm has opted to obtain a feedback for expert $a$. As before, for any $a\in [K]$, we have $O_{t,a} \subseteq O_{t',a}$ for all $t' \ge t$. Similar to full-information, for the set of prompts evaluated until round $t$, define $S_t$ to be the set of past rounds when the prompts were provided as input to the algorithm.  We denote $\tau_{t,a} \triangleq \max \{\ell \in [t - 1]: \ell \in O_{t,a} \wedge (\ell - 1) \notin O_{t,a}\}$ denote the last round before round $t$ when the expert $a$ was updated.

As in the proof of Theorem \ref{thm:full_information}, we will exploit standard guarantees on confidence sets regarding the OLS estimator in the linear model. We have the following at every round $t\in [\s{T}]$ and every expert $a\in [K]$ with probability at least $1-o(T^{-2})$: 
\begin{align}\label{eq:high_prob2}
    \lr{\theta_t - \widehat{\theta}_{t,a}}_{V_{t-1,a}} \le \beta  \text{ where } \beta = \sqrt{\lambda}+\sqrt{6\log T+d\log \Big(\frac{d\lambda+mL^2}{d\lambda}\Big)}.
\end{align}
Denote the event $\ca{E}$ where \eqref{eq:high_prob2} is true for all rounds $t\in [T]$ and all experts $a\in [K]$. For any prompt $x_t$ at round $t$, the parameter vector in the confidence region that leads to the highest reward is given by 
\begin{align*}
    \widetilde{\theta}_{t,a} = \sup_{\phi \in \lr{\phi - \widehat{\theta}_{t,a}}_{V_{t-1,a}} \le \beta } \ip{\phi, x_t}. 
\end{align*}
and therefore, the expert $a_t$ to generate the response is chosen as $\max_{a\in [K]} \ip{\widetilde{\theta}_{t,a},x_t}$. It is interesting to note that the bilinear optimization problem  has a nice closed form expression given by (see Sec 19.3.1 in \cite{lattimore2020bandit})
\begin{align*}
    a_t = \displaystyle \argmax_{a \in [K]} \ip{\widehat{\theta}_{t,a}, x_t} + \beta \cdot \lr{x_t}_{V_{t-1,a}^{-1}}
\end{align*}
    Now, we can decompose the RHS in  \eqref{eq:reg_simple} for any round $t\in [T]$ as follows:

\begin{align*}
    \ip{\theta_{a^{\star}_t},x_t} -  \ip{\theta_{a_t},x_t} \le  \ip{\widehat{\theta}_{a^{\star}_t},x_t}+\beta \lr{x_t}_{V_{t,a^{\star}_t}^{-1}}-\ip{\theta_{a_t},x_t} \le \ip{\widehat{\theta}_{a_t},x_t}+\beta \lr{x_t}_{V_{t,a_t}^{-1}}-\ip{\theta_{a_t},x_t}
\end{align*}

For the first inequality we bound the reward for the expert $a_t^{\star}$ from above by the upper confidence bound on the reward. The second inequality follows from $\ip{\widehat{\theta}_{t,a^{\star}_t}-\widehat{\theta}_{t,a_t},x_t} \le 0$. This is because, by choice of our algorithm, we have picked the expert index $a_t$ at each round $t$ having the largest upper confidence reward on prompt $x_t$.

 The covariance matrices for each of the experts are updated separately in the bandit feedback setting. Recall that the noisy reward for only a single expert is observed whenever the algorithm opts for a feedback. We denote the covariance matrix of expert $a\in [K]$ at round $t$ by $V_{t-1,a}$. Again, recall that feedback is solicited for expert $a$ after the expert $a$ has been used to generate a response $z$ times.  Now, we can show the following for any round $t$

\begin{align*}
 \ip{\theta_{a^{\star}_t},x_t} -  \ip{\theta_{a_t},x_t} &\le  \beta \lr{x_t}_{V_{t,a_t}^{-1}}+ \lr{\widehat{\theta}_{t,a_t}-\theta_{a_t}}_{V_{t,a_t}}\lr{x_t}_{V_{t,a_t}^{-1}} \\
 &\le 2\beta    \lr{x_t}_{V_{t,a_t}^{-1}}
\end{align*}

Now, we need to bound from above the final term on the RHS in terms of the individual experts as follows:
\begin{align*}
    \sum_{t=1}^{T}  \lr{x_t}_{V_{t,a_t}^{-1}} &= \sum_{a=1}^{K} \sum_{t=1}^{T}  \lr{x_t}_{V_{t,a_t}^{-1}} \mathbf{1}[a_t=a] \\
    &\le \sum_{a=1}^{K}\sum_{t=1}^{T}\Big(1 \wedge \lr{x_t}_{V_{t,a_t}^{-1}} \mathbf{1}[a_t=a]\Big) \\
    &\le \sqrt{T\sum_{a=1}^{K}\sum_{t=1}^{T}\Big(1 \wedge \lr{x_t}^2_{V_{t,a_t}^{-1}}\Big) \mathbf{1}[a_t=a]\Big)}
\end{align*} 
In the pre-final step, we used that in each round $t$, we also have $\ip{\theta_{a^{\star}_t},x_t} - \ip{\theta_{a_t},x_t} \le 1$ from the assumption in theorem statement. In the final step, we used the Cauchy-Schwarz inequality. Note that there are only $T$ terms in the RHS since at each round, only a single expert is chosen.
Consider a particular expert $a\in [K]$ - it is chosen for obtaining feedback after being used for generating responses $z$ times since the last time feedback was obtained for expert $a$. 

At any round $t$, recall $\tau_{t,a}= \max \{\ell \in [t - 1]: \ell \in O_{t, a} \wedge (\ell - 1) \notin O_{t, a}\}$ denote the last round from $t$ when the expert $a$ was chosen for obtaining feedback (after the counter for expert $a$ reached $z$). For any round $t\in O_{t,a}$, as per our notation, recall that $V_{t-1,a}$ is the covariance matrix for the expert $a$ at round $t$. Then $\tau_{t-1,a}$ is the previous round when the expert $a$ was updated. For any round $t$, we can use the fact that for any $u\ge 0$, we have $u \wedge 1 \le 2\log(1+u)$,
\begin{align*}
    1 \wedge\lr{x_t}^2_{V_{t,a_t}^{-1}}  &\le  \log \Big(1+\lr{x_t}^2_{V_{t,a_t}^{-1}}\Big) =  \log \s{det}\Big(I+V_{t,a_t}^{-1/2}x_tx_t^{T}V_{t,a_t}^{-1/2}\Big) \\
    &=  \log \s{det}\Big(V_{t,a_t}+x_tx_t^{T}\Big) -  \log \s{det} V_{t,a_t} =  \log \s{det}\Big(V_{\tau_{t,a_t},a_t}+x_tx_t^{T}\Big) -  \log \s{det} V_{\tau_{t,a_t},a_t}
\end{align*}

Now for a fixed expert $a\in [K]$, consider a round $t\in O_{T,a}$ where feedback has been observed for some prompt in the history of expert $a$. As we defined before, $\tau_{t-1,a}$ corresponds to the previous round in history (relative to round $t)$ where feedback has been observed for expert $a$. From Algorithm \ref{alg:banditv2}, recall that $s_t$ was the prompt chosen at round $t\in O_{t,a}$ for feedback following which we had updated $V_{t-1,a}=V_{\tau_{t-1},a}+x_{s_t}x_{s_t}^{T}$. Further, recall that the prompt index $s_t$ at round $t$ was chosen in the following way:
\begin{align*}
    s_t = \displaystyle \argmax_{s\in  \{s \in [t]: a_s = a\} \setminus S_{t - 1,a}} \s{det}( V_{t-1,a} + x_{s}x_{s}^{T}) = \displaystyle \argmax_{s\in  \{s \in [t]: a_s = a\} \setminus \mathcal{S}_{t-1,a}\, } \s{det}( V_{\tau_{t-1,a},a} + x_{x}x_{s}^{T})
\end{align*}
 implying that the prompt $s_t$ was responded to by the expert $a_t$ and further, the prompt $s_t$ increases the determinant of the covariance matrix $V_{\tau_{t-1,a},a}$ the most. Thus, for every round $r\in [\tau_{t-1,a}+1,t]$ satisfying $\mathbf{1}[a_r=a]$, we must have $\s{det}(V_{r,a} + x_{r}x_{r}^{T}) = \s{det}(V_{\tau_{t-1,a},a}+x_{r}x_{r}^{T}) \le \s{det}(V_{\tau_{t-1,a},a}+x_{s_t}x_{s_t}^{T})$. Hence, we have
 \begin{align*}
     \sum_{r\in [\tau_{t-1}+1,t]} \log \s{det}(V_{r,a} + x_{r}x_{r}^{T}) \le c \cdot \log \s{det}(V_{\tau_{t-1,a},a}+x_{s_t}x_{s_t}^{T}) = \lceil T/m \rceil \log \s{det} V_{t-1,a}.
 \end{align*}
 In the last step, we used the threshold on $z$ that would meet the feedback budget $m$. As we had shown, it is sufficient to have $z\ge T/m$ so that the total amount of feedback rounds is at most $m$.
Therefore, by combining all of these, we get 

\begin{align}\label{eq:interval}
\sqrt{T\sum_{a=1}^{K}\sum_{t=1}^{T}\Big(1 \wedge \lr{x_t}^2_{V_{t,a_t}^{-1}}\Big) \mathbf{1}[a_t=a]\Big)} &= \sqrt{T\sum_{a=1}^{K}\Big(z \sum_{t\in O_{T,a}} \Big(\log \s{det} V_{t-1,a} - \log \s{det} V_{\tau_{t-1,a},a}\Big)+c\Big)} \\
&\le Tm^{-1/2} \sqrt{\sum_{j=1}^{K} \Big(\log \s{det}(V_{t-1,a} -  \log \s{det} V_{0,a}\Big)}+\sqrt{cKT}.
\end{align}

For a fixed expert $a\in [K]$, notice that $V_{t-1,a}=\sum_{t\in O_{T,a}} x_{s_t}x_{s_t}^{T}$, $\left|O_T\right| \le m$ (since we are getting feedback for at most $m$ rounds for any expert). Moreover, for all prompts $\{x_t\}_{t\in [T]}$, we have $\s{Tr}(x_tx_t^T) = \lr{x_t}_2^2 \le 1$ since all prompts are within the unit ball. 
Now, we use the AM-GM inequality and linearity of trace operation to show 
\begin{align*}
    \s{det} V_{t-1,a} \le \Big(d^{-1} \s{Tr}(V_{t-1,a})\Big)^d \le \Big(d^{-1} \Big(\s{Tr}(V_{0,a})+m\Big)\Big)^d  
\end{align*}
We have $\s{Tr}(V_{0,a}) \le d\lambda$ and therefore, we bound the regret conditioned on the event $\ca{E}$ (denote by $\s{Reg}(T)\mid \ca{E}$) as 
\begin{align*}
    \s{Reg}(T)\mid \ca{E} = O\Big(TK^{1/2}m^{-1/2}\sqrt{d\beta\log\Big(\frac{d\lambda+m}{d}\Big)}\Big)
\end{align*}
\begin{align*}
    \s{Reg}(T) = \s{Reg}(T)\mid \ca{E} + 2T\Pr(\ca{E}^c) = \s{Reg}(T)\mid \ca{E}+o(T^{-1}).
\end{align*}
Now, we can substitute the value for $\beta$ to obtain the final theorem statement.
 
\end{proof}

\subsection{Proof of Theorem \ref{thm:lower_bound_full_info}}

\begin{proof}[Proof of Theorem \ref{thm:lower_bound_full_info}] 
    Consider number of rounds $T$, a feedback budget of $m$ (full-information feedback) and all observations to be Gaussian random variables with unit variance. 
    While constructing our instances, let us denote the set of prompts we choose from to be $\ca{X} \subset \{-1,+1\}^d$ and the set of model features to be $\theta\subset \{-\delta,+\delta\}^d$ for some $\delta>0$. For any two vectors ($x,y$), we can define the Hamming distance $d_{h}(x,y)=\sum_{i=1}^{d} \mathbf{1}[\text{sign}(x_i)\neq \text{sign}(y_i)]$.

    Fix a vector $x\in \{-1,+1\}^d$.
    Now, we define an instance in the full-information expert selection setting as follows: for each of the $T$ rounds, a single prompt $x\in \ca{X}$ is going to be demonstrated and the set of expert parameters in this instance is given by $\theta\equiv \{\theta \in \{-\delta,+\delta\}^d \mid d_h(\theta,x) \le 1\}$. In other words, the set of allowed experts have feature embedding which are within a Hamming distance of $1$ from the prompt embedding vector $x$. Given this environment, . Recall that the expert chosen at round $t$ is denoted by $a_t$.
    Now, we define $d$ alternate learning instances as follows: in each alternate instance, a vector $x'$ satisfying $d_h(x,x')=1$ is demonstrated at all rounds while the set of model features $\ca{T}(x)$ remain the same. Therefore, the set of prompts across the $d+1$ instances is $\ca{X} \equiv \{y \in \{-1,+1\}^d \mid d_h(y,x) \le 1\}$. 
    Since the distinction between the instances is only in the prompt (the expert features $\theta$ remain same in all instances), we denote by $\bb{P}_{y}$ and $\bb{E}_{y}$ the probability and expectation of events under a fixed policy for instance defined by prompt $y\in \ca{X}$.

    Consider the environment defined by the prompt $x$. Let us define the following event which is true when the number of times the sign of the $i^{\s{th}}$ entry of the prompt and algorithm output (observation of history's actions) differs is at least $T/2$
    \begin{align*}
        \ca{E}_{x,i} = \mathbf{1}[\sum_{i=1}^{T} \text{sign}(x_i) \neq \text{sign}(a_{ti}) \ge T/2]
    \end{align*}
    and the corresponding probability to be $p_{x,i}=\bb{P}_{x}(\ca{E}_{x,i})$. Now consider the alternate instance with prompt $x'\in \ca{X}$  where $x'_j=x_j$ for all $j\neq i$ and $x'_i = -x_i$. Therefore, $x$ and $x'$ have a Hamming distance of $1$ and differ in sign at the index $i$. Note that the event $\ca{E}_{x',i}$ is the complement of the event $\ca{E}_{x,i}$.  We have 
    \begin{align*}
        p_{x,i}+p_{x',i} = \bb{P}_{x}(\ca{E}_{x,i})+ \bb{P}_{x'}(\ca{E}_{x,i}^c) \ge \frac{1}{2}\exp\Big(-\s{D}_{\s{KL}}(\bb{P}_{x}||\bb{P}_{x'})\Big) 
    \end{align*}
    Above, we used the Bretagnolle Huber inequality (see \cite{lattimore2020bandit}) and $\s{D}_{\s{KL}}$ is the Kullback-Leibler Divergence. It is given by 
    \begin{align*}
        \s{D}_{\s{KL}}(\bb{P}_{x}||\bb{P}_{x'}) = m \bb{E}_{x}\Big[\s{D}_{\s{KL}}\Big(\ca{N}(\theta \cdot x,I_{d+1})||\ca{N}(\theta \cdot x',I_{d+1})\Big]\Big) =m \lr{\theta(x-x')}_2^2 = m\delta^2(d+1).
    \end{align*}
    In that case, for the value of $\delta=1/\sqrt{m(d+1)}$, we get $p_{x,i}+p_{x',i} \ge \exp(-1)/2$. 
    Now, we want to extend the analysis to environments
defined by prompts $y\in \ca{X}\setminus\{x\}$. Fix such a prompt $y$ and define the event which is true when the number of times the sign of the $i^{\s{th}}$ entry of the prompt and the expert differs is at least $T/2$
    \begin{align*}
        \ca{E}_{y,i} = \mathbf{1}[\sum_{i=1}^{T} \text{sign}(y_i) \neq \text{sign}(a_{ti}) \ge T/2]
    \end{align*}
Without loss of generality, assume that the index where $y$ and $x$ differs in sign is $j\neq i$ (otherwise the alternate instance trivially becomes the environment defined by $x$). In that case, consider the alternate instance defined by prompt $y'$ such that $\fl{y'}the = x_s$ for all $s\neq i$ and $\fl{y'}_i = -x_i$ ($y'$ and $x$ differs in sign at the index $i$). Therefore, $y$ and $y'$ have a Hamming distance of $2$ and differ in sign at the indices $i,j$. However, note that the event $\ca{E}_{y',i}$ is the complement of the event $\ca{E}_{y,i}$. 
 By a similar analysis as above, we have
 \begin{align*}
     p_{y,i}+p_{y',i} \ge \frac{1}{2}\exp\Big(-\s{D}_{\s{KL}}(\bb{P}_{y}||\bb{P}_{y'})\Big) = \frac{1}{2} \exp\Big(-m \lr{\theta(y-y')}_2^2 \Big) \ge \frac{1}{2} \exp\Big(-4m\delta^2(d+1)\Big).
 \end{align*}
In that case, for the value of $\delta=1/\sqrt{m(d+1)}$, we have 
    \begin{align*}
        p_{y,i}+p_{y',i} \ge \exp(-4)/2.
    \end{align*}
    Since there are $d+1$ possibilities of the prompt $x$, we can have 
    \begin{align*}
        \frac{1}{\left|\ca{X}\right|} \sum_{x\in \ca{X}}\sum_{i=1}^{d} p_{x,i} =  \frac{1}{\left|\ca{X}\right|}\sum_{i=1}^{d}  \sum_{x\in \ca{X}}  p_{x,i} \ge \frac{d\exp(-4)}{4}.
    \end{align*}
    Hence, there must exist a prompt $y \in \ca{X}$ for which $\sum_{i=1}^{d} p_{y,i} \ge d\exp(-4)/4$. For this instance defined by the prompt $y$ and expert features $\theta$, we have 
    \begin{align*}
        \s{Reg}(T) &= \mathbb{E}_{y}\Big[\sum_{t=1}^{T} \langle y, \delta \cdot y-\fl{ a}_{t}\rangle \Big] \\
         &= \mathbb{E}_{y}\Big[\sum_{t=1}^{T} \langle y, \delta y-A_{t}\rangle \Big] = \mathbb{E}_{y}\Big[\sum_{t=1}^{T} \sum_{i=1}^{d} y_i \Big( \delta y_i-a_{ti}\Big) \Big] = 2\delta \mathbb{E}_{y}\Big[\sum_{t=1}^{T} \sum_{i=1}^{d} \mathbf{1}[\text{sign}(y_i)\neq \text{sign}(a_{ti})] \Big] \\
         &= 2\delta \sum_{i=1}^{d}\mathbb{E}_{y}\Big[\sum_{t=1}^{T}  \mathbf{1}[\text{sign}(y_i)\neq \text{sign}(a_{ti})] \Big] =  \delta T \sum_{i=1}^{d} \bb{P}_{yi} \ge \frac{\delta d T \exp(-4)}{4} \ge \frac{T\sqrt{d}\exp(-4)}{8\sqrt{m}}.
    \end{align*}
    This completes the proof of the theorem.
\end{proof}

\subsection{Proof of Theorem \ref{thm:lower_bound_bandit}}\label{app:lower_bound}

\begin{proof}[Proof of Theorem \ref{thm:lower_bound_bandit}] 
The proof of the lower bound for the bandit setting follows on similar lines as the proof in Theorem \ref{thm:lower_bound_full_info}. 
    Consider number of rounds $T$, a feedback budget of $m$ and number of experts to be $K$ and all observations to be Gaussian random variables with unit variance. Note that in this setting, for a particular prompt, feedback is only provided to the expert that was used to generate a response for that prompt. As before, we denote the set of prompts to be $\ca{X} \subset \{-1,+1\}^d$ and the set of experts features to be $\theta\subset \{-\delta,+\delta\}^d$ for some $\delta>0$. For any two vectors $x,y$, we can define the Hamming distance $d_{h}(x,y)=\sum_{i=1}^{d} \mathbf{1}[\text{sign}(x_i)\neq \text{sign}(y_i)]$.

    Fix a prompt vector $x$ to be the  $d$-dimensional all ones vector. For simplicity assume that $d$ is divisible by $\s{K}$.
Now, we define an instance in the bandit feedback setting as follows: for each of the $T$ rounds, the single prompt $x\in \ca{X}$ is going to be demonstrated at all rounds and the set of expert parameters is given by a fixed set of $K$ vectors from the set $\theta\equiv \{\theta \in \{-\delta,+\delta\}^d \mid d_h(\theta,x) \le 1\}$. Without loss of generality, let $\{\theta_{a}\}_{a\in [K]}$ be the set of parameter vectors which has its sign flipped from $x$ in one of the first $K$ entries. 

Now, we can define $K$ alternate learning instances as follows: in each alternate instance, a vector $x'$ satisfying $d_h(x,x')=1$ is demonstrated at all rounds while the set of expert features $\ca{T}(x)$ remain the same. Further $x'$ has its sign changed only among the first $K$ entries of $x$. Therefore the last $d-K$ indices are inconsequential and we basically have a $K$-dimensional problem. Now, we go through the same steps as in the proof of Theorem \ref{thm:lower_bound_full_info}. However, since we are in the bandit feedback setting, we will get after using the Bretagnolle Huber inequality that the KL divergence between data distributions of any two instances is $O(\exp(-m\delta^2))$. Hence, substituting $\delta=1/\sqrt{m}$ and then resuming the same steps gives us the following lower bound:
\begin{align*}
   \s{Reg}(\s{T}) \ge  \frac{TK\exp(-4)}{8\sqrt{m}}.
\end{align*}

\end{proof}

\twocolumn
\section{Related Work}
\label{sec:related work}

There are many notions of limited feedback and resource constraints that have been studied in the past. Partial monitoring \citep{agrawal89asymptotically,bartok12partial,bartok14partial} and learning to rank \citep{radlinski08learning,kveton15cascading,kveton15combinatorial,li16contextual} are bandit problems where the agent observes rewards of taken actions only partially. In our setting, the agent observes the reward fully but decides what to observe.
In bandits with delayed feedback \citep{zhou19learning,vernade20linear}, the agent observes rewards of taken action with delays. The delay is not controlled by the agent. Our agent observes rewards with delays but decides what to observe, and thus controls the delay. Further, it is well known that the linear model in bandit algorithms can be updated lazily, whenever the determinant of the covariance matrix increases significantly (Section 5.1 in \citet{abbasi-yadkori11improved}). When the model is updated, all past observations are used. In our setting, we update the model periodically but only use a subset of past observations selected by the agent. The knapsack is a popular way of modeling resource constraints in bandits \citep{tran-thanh12knapsack,agrawal16linear}. We have a resource constraint but it is significantly simpler. This is why we can minimize the regret greedily at a near-optimal rate by periodically taking the most uncertain action in the past. In conservative bandits \citep{wu16conservative,kazerouni17conservative}, the agent takes a greedy exploratory action after accumulating a sufficient exploratory budget. The regret bound involves an extra term due to taking the safe action. In our setting, the agent has more control because it can observe the reward of any past action. Therefore, the extra term does not appear. In online learning with abstention \citep{cortes18online}, the agent decides whether to predict or abstain. When the agent abstains, it pays a fixed a cost. We incur regret in each round and decide which past rewards to observe.

Recently, model selection under resource constraints has been studied in the context of LLMs. Most techniques aim to reduce the cost of inference by choosing models (LLMs) appropriately. The cost of inference depends on input and output token length and can be either API level cost or the compute cost of hosting and serving LLM. Note that our notion of cost is different - we consider the cost of labeling the LLM responses through human or other forms of gold feedback. 

A popular technique towards reducing costs is LLM cascading that invokes LLMs sequentially, progressing to higher cost LLMs, till the response is deemed satisfactory, often by another model \citep{chen2023frugalgpt, aggarwal2023automix, ramirez2024optimising}. More closely related to our bandit setting is LLM routing that aims to route queries directly to the appropriate model, requiring only a single inference \citep{shekhar2024towards,vsakota2024fly}. One major difference between these works and ours is that while we aim to learn the appropriate models in an online manner (either full information setting or bandit setting) by selectively obtaining labels for certain queries, existing works mostly aim to predict the output quality or performance (either absolute or relative to each other) of the LLMs for given queries for routing them \citep{dinghybrid, shekhar2024towards, shnitzer2023large, lu-etal-2024-routing, hari2023tryage, ong2024routellm, vsakota2024fly}. The routing is based on the prediction and other considerations such as cost and latency.  

Some recent works have modeled model selection with cost considerations as a multi armed bandit problem. MetaLLM \citep{nguyen2024metallm} frames the reward as a linear combination of accuracy and cost of inference and proposes a bandit algorithm for learning the best arm. Again, the notion of cost and the modeling of the problem are both different in our setting. Recently, TI-UCB \citep{xia2024llm} predicts the increase of model performances due to training or finetuning and efficiently balances exploration and exploitation in model selection. \cite{dai2024cost} proposed CSMAB-V based on combinatorial multi-armed bandits to select a good LLM combination, which aims to balance cost and rewards. This is also a different setting compared to ours. 

Other works such as \cite{huang2025thriftllm} focused on selecting a set of LLMs under a given cost budget to maximize performance. \cite{owodunni2023koya} proposed a recommender system approach called Koya for selecting the best LLM for a given task and language. There has also been work on online model selection with partial information. 
In particular, \cite{NEURIPS2019_433371e6} investigated model selection under contextual bandit feedback whereas \cite{pmlr-v139-cella21a} framed online model selection as a rested bandit problem. Further, \citet{karimi2021online} leverages active learning to identify the best model from a pool of pre-trained classifiers. Other recent work has focused on the orthogonal problem of leveraging LLMs for model selection \cite{wu2024large}. These works focus mostly on leveraging LLMs to capture the structural and semantic properties of the model for better selection.

Online learning and bandits with observation budget have been studied extensively in the adversarial setting. This line of work was started by \citet{hembolt97some}. \citet{cesabianchi05minimizing} proved a $T \sqrt{(\log K) / m}$ regret bound for the exponentially-weighted forecaster, where $K$ is the number of experts. \citet{audibert10regret} derived a $T \sqrt{K (\log K) / m}$ regret bound for the bandit setting. None of these works study the contextual setting. However, their scaling with the number of rounds $T$ and observation budget $m$ is similar to our work. The closest related work in the stochastic bandit setting is \citet{tucker23bandits}. This work proposes a contextual bandit algorithm but it is not analyzed. The algorithm maximizes the difference of rewards and costs, and is not applicable to our problem because the observation cost and reward are not comparable, and thus cannot be simply subtracted. We both propose a practical algorithm and analyze it.

\section{Implementation Details}
\label{sec:exp_details}
For our experiments, we use the RouterBench~\citep{hu2024routerbench} which is a high-quality routing benchmark consisting a diverse set of 11 Large Language Models. There are approximately $405k$ prompts and for each prompt, responses from the 11 models are evaluated and normalized to 0 and 1. We randomly chose a subset of $10k$ prompts out of the total pool of $405k$ prompts. We found no model dominates the others more than a third of the time (\cref{sec:introduction}). Thus, we have $K=11$ experts in this setting. For each of the $10k$ prompts, we create its $384$ dimensional embedding using the bge-small-en-v1.5 model \citep{bge_embedding}. Next we do PCA to reduce the dimension to $40$, capturing $50\%$ of the feature variance, resulting in a $10000 \times 40$ sized data matrix. In order to generate the expert parameters $\{\theta_a\}_{a\in [K]}$, we take the evaluated ranking from GPT-4 for each of $10k$ prompts and create a one-hot vector of dimension $11$. In this vector, only the model whose response has been selected as best is assigned $1$ and we have $0$ everywhere else. Therefore we have a response vector of dimension $10000\times 11$. Now we obtain the parameter $\{\theta_a\}_{a\in [K]}$ for each of the experts by computing the OLS estimate on this dataset.

\begin{table}[t]
  \centering
  \caption{Runtime in seconds with respect $K$, $d$, and $T$ on Nectar.}
  \label{tab:runtime}
  \begin{tabular}{l r  rr}
    \toprule
    Axis & Value & LimBanFeed & LimFullFeed \\
    \midrule
   {$K$}
      & 2  & $2.67 \pm 0.02$ & $1.73 \pm 0.34$ \\
      & 6  & $4.30 \pm 0.03$ & $1.49 \pm 0.01$ \\
      & 10 & $5.93 \pm 0.04$ & $1.47 \pm 0.01$ \\
      & 16 & $8.36 \pm 0.02$ & $1.49 \pm 0.04$ \\
      & 20 & $9.97 \pm 0.00$ & $1.47 \pm 0.00$ \\
    \midrule
   {$d$}
      & 8   & $4.25 \pm 0.00$ & $1.49 \pm 0.04$ \\
      & 24  & $4.24 \pm 0.00$ & $1.46 \pm 0.00$ \\
      & 40  & $4.27 \pm 0.04$ & $1.47 \pm 0.00$ \\
      & 96  & $4.34 \pm 0.00$ & $1.51 \pm 0.01$ \\
      & 128 & $4.34 \pm 0.00$ & $1.54 \pm 0.04$ \\
    \midrule
    {$T$}
      & 500    & $0.41 \pm 0.00$ & $0.15 \pm 0.00$ \\
      & 1{,}000 & $0.82 \pm 0.00$ & $0.29 \pm 0.00$ \\
      & 5{,}000 & $4.27 \pm 0.04$ & $1.47 \pm 0.00$ \\
      & 10{,}000 & $8.89 \pm 0.02$ & $2.98 \pm 0.05$ \\
      & 15{,}000 & $14.35 \pm 0.03$ & $4.47 \pm 0.03$ \\
    \bottomrule
  \end{tabular}
\end{table}

\begin{figure*}[t]
    \centering
    \subfigure[Full Information Setting]{
    \includegraphics[scale=0.30]{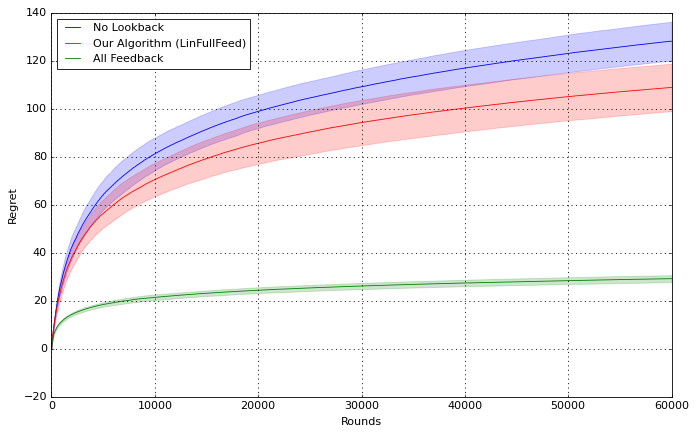}
    \label{fig:full_information}
    }
    \hfill
    \subfigure[Bandit Setting]{
    \includegraphics[scale=0.30]{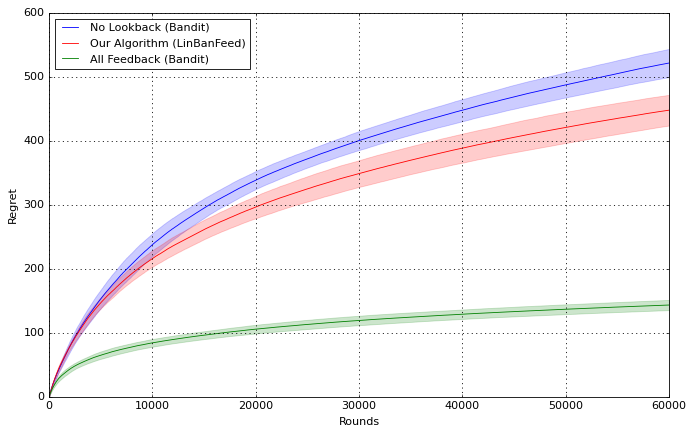}
    \label{fig:bandit}
    }

    \caption{\small Results comparing our approaches to the other methods (Nectar):
    (a) \nolookback (evaluates prompt at round when feedback is requested) and (b) \allfeedback (observes feedback at all rounds). Clearly, \limfullfeed has better regret guarantees than \nolookback by careful choice of feedback. \allfeedback suffers the smallest regret due to more data.
    }
    \label{fig:combined}
\end{figure*}

\end{document}